\documentclass{article}

\usepackage{natbib}
\bibliographystyle{plainnat}

\usepackage{arxiv}
\usepackage{hyperref}
\usepackage{url}
\usepackage{graphicx}
\usepackage{xr-hyper}
\usepackage{microtype}
\usepackage{graphicx}
\usepackage{caption}
\usepackage{amsfonts}
\usepackage{subcaption}
\usepackage{booktabs} 
\usepackage{amssymb}
\usepackage{longtable}
\usepackage{tabularx}
\usepackage{makecell}
\usepackage{tikz}

\usepackage{fancyhdr}
\usepackage{amsmath}
\usepackage{hyperref}
\usepackage[noend]{algpseudocode}
\usepackage{amsthm}

\usepackage{placeins}

\usepackage[ruled,vlined]{algorithm2e}

\usepackage{wrapfig}

\usepackage{amsmath,amsfonts,amssymb}

\usepackage{setspace}
\usepackage{tocloft}
\usepackage{scrextend}
\usepackage{hhline}
\usepackage{tabularx}
\usepackage[utf8]{inputenc}
\usepackage{multirow}
\usepackage{mathrsfs}
\usepackage{adjustbox}
\usepackage{bbm}
\usepackage{tablefootnote}

\usepackage{enumitem}
\usepackage{amsthm}

\newtheorem{theorem}{Theorem}[section]

\usepackage{xcolor}

\title{Learning Nonparametric High-Dimensional Generative Models:\\ The Empirical-Beta-Copula Autoencoder}

\author{ 	Oliver Grothe \\
	Chair of Analytics \& Statistics \\
	Institute for Operations Research \\
	Karlsruhe Institute of Technology (KIT)\\
	$\texttt{oliver.grothe@kit.edu}$ \\
	\And
	Fabian Kächele \\
	Chair of Analytics \& Statistics \\
	Institute for Operations Research \\
	Karlsruhe Institute of Technology (KIT)\\
	$\texttt{fabian.kaechele@kit.edu}$ \\	
	\And
	Maximilian Coblenz \\
Department of Services and Consulting\\
Ludwigshafen University of Business and Society\\
	$\texttt{maximilian.coblenz@hwg-lu.de}$
}

\begin{document}

\maketitle

\begin{abstract}
 By sampling from the latent space of an autoencoder and decoding the latent space samples to the original data space, any autoencoder can simply be turned into a generative model. For this to work, it is necessary to model the autoencoder's latent space with a distribution from which samples can be obtained. Several simple possibilities (kernel density estimates, Gaussian distribution) and more sophisticated ones (Gaussian mixture models, copula models, normalization flows) can be thought of and have been tried recently. This study aims to discuss, assess, and compare various techniques that can be used to capture the latent space so that an autoencoder can become a generative model while striving for simplicity. Among them, a new copula-based method, the \textit{Empirical Beta Copula Autoencoder}, is considered. Furthermore, we provide insights into further aspects of these methods, such as targeted sampling or synthesizing new data with specific features.

\end{abstract}

\section{Introduction}
Generating realistic sample points of various data formats has been of growing interest in recent years. Thus, new algorithms such as \textit{Autoencoders (AEs)} and \textit{Generative Adversarial Networks (GANs)} \cite{NIPS2014_5ca3e9b1} have emerged. GANs use a discriminant model, penalizing the creation of unrealistic data from a generator and learning from this feedback. 
On the other hand, AEs try to find a low-dimensional representation of the high-dimensional input data and reconstruct from it the original data. To turn an AE into a generative model, the low-dimensional distribution is modeled, samples are drawn, and thereupon new data points in the original space are constructed with the decoder. \textcolor{black}{We call this low dimensional representation of the data in the autoencoder the \textit{latent space} in the following.} Based on that, \textit{Variational Autoencoders (VAEs)} have evolved, optimizing for a Gaussian distribution in the latent space \cite{Kingma2014}.  Adversarial autoencoders (AAEs) utilize elements of both types of generative models, where a discriminant model penalizes the distance of the encoded data from a prior (Gaussian) distribution \citep{makhzani2016adversarial}. However, such strong (and simplifying) distributional assumptions as in the VAE or AAE can have a negative impact on performance, leading to a rich literature coping with the challenge of reducing the gap between approximate and true posterior distributions (e.g., \citealt{pmlr-v37-rezende15,pmlr-v84-tomczak18a,NIPS2016_ddeebdee,pmlr-v37-gregor15,https://doi.org/10.48550/arxiv.1801.03558,pmlr-v80-marino18a,10.1609/aaai.v33i01.33015066}). In this paper we discuss more flexible approaches modeling the latent space without imposing restrictions on the underlying distribution.

Recently, \citealt{Vatter2019} presented the \textit{Vine Copula Autoencoder (VCAE)}. Their approach comprises two building blocks, an autoencoder and a vine copula which models the dependence structure in latent space. By that, they were able to create realistic, new images with samples from the fitted vine copula model in the latent space. In this work, we want to elaborate on this idea and compare various methods to model the latent space of an autoencoder to turn it into a generative model. To this end, we analyze, amongst others, the usage of \textit{Gaussian mixture models (GMM)} as done by \citealt{Ghosh2020From}, the vine copula approach by \citealt{Vatter2019}, and simple multivariate \textit{Kernel Density Estimates}.
Additionally, we introduce a new, non-parametric copula approach, the \textit{Empirical Beta Copula Autoencoder (EBCAE)}. 
To get a deeper understanding of how this can turn a standard autoencoder into a generative model,
we inspect resulting images, check the models for their ability to generalize and compare additional features. \textcolor{black}{In this study we do not aim to beat the latest SOTA generative models but want to shed light on different modeling techniques in the latent space and their characteristics in a rather straightforward autoencoder setting, which may be applied in more sophisticated models as well. Thus, we strive for simplicity and take an alternative route to more and more complex models. We believe that such an analysis in a straightforward setting is essential for understanding the effects from different sampling methods, which may then be applied in more advanced generative models.}
We also check whether the methods may be a simple alternative to more complex models, such as normalization flows \cite{pmlr-v37-rezende15} or diffusion models (see, e.g., \citealt{Diffusion, Diffusion1}). More specifically, we use the well-known Real NVP \citep{realNVP} as an example from these more sophisticated machine learning models \textcolor{black}{in the latent space} but do not elaborate on these in detail. 
Note that in contrast to other methods (e.g., as proposed by \citealt{Oring2021AutoencoderII}, \citealt{berthelot*2018understanding} or \citealt{VQ-VAE}), the investigated overall approach does not restrict or change the training of the autoencoder in any form. 
\begin{figure}[t]
		\centering
		\includegraphics[width=0.55\linewidth]{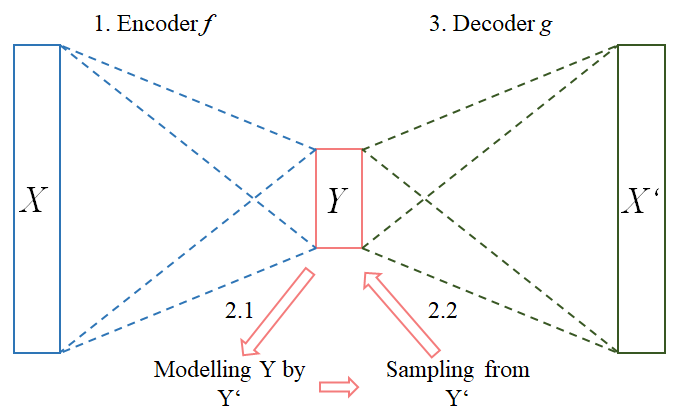}
	\caption{Function scheme of simple generative autoencoders. 1. An encoder $f$ encodes the data $X$ to a low dimensional representation $Y$. 2.1 $Y$ is modeled by $Y'$, 2.2 Generate new synthetic samples of the latent space by sampling from $Y'$. 3. Decode the new samples with the decoder $g$. }
	\label{autoenc}
\end{figure}

All models considered in this work are constructed in three steps, visualized in Figure \ref{autoenc}. First, an autoencoder, consisting of an encoder $f$ and a decoder $g$, is trained to find a low-dimensional representation of the data $X$. Second, the data in the latent space $Y$ is used to learn the best fitting representation $Y'$ of it. This is where the examined models differ from each other by using different methods to model the latent space. Finally, we sample from the learned representation of the latent space and feed the samples into the decoder part of the autoencoder, creating new synthetic data samples. 

Generative models are a vivid part of the machine learning literature. For example, new GAN developments \cite{varshney2021camgan,karras2021aliasfree,lee2021selfdiagnosing,pmlr-v139-hudson21a}, developments in the field of autoencoders, \cite{pmlr-v48-larsen16,pmlr-v139-yoon21c,pmlr-v119-zhang20ab,DBLP:conf/icml/ShenMBJ20} or developments in variational autoencoders \cite{NIPS2015_8d55a249,pmlr-v139-havtorn21a,NEURIPS2019_618faa17,learinglatent} are emerging. We again want to emphasize that for the models we consider, no prior is needed, nor the optimization approach is changed, i.e., the latent space is modeled after the training of the autoencoder post-hoc. Thus, the presented approach could be transferred to other, more sophisticated, state-of-the-art autoencoders, as hinted in \citealt{Ghosh2020From}. The general idea of creating new data by sampling in the latent space of a generative model has already been used by, e.g., \citealt{Vatter2019, wimpf,NEURIPS2020_05192834} or \citealt{Ghosh2020From}, but to the best of our knowledge, no analysis and comparison of such methods have been made so far. Closely related, more and more researchers specifically address the latent space of generative models \cite{MISHNE2019259,pmlr-v119-fajtl20a,pmlr-v119-moor20a,Oring2021AutoencoderII,Hofert} in their work. \textcolor{black}{There, especially hierarchical methods as suggested by \cite{maale2019biva} seem to be promising. Further, Autoencoders based on the Wasserstein Distance lately achieved excellent results by changing the regularization term of a VAE and using or learning a Gaussian Mixture Prior \cite{Wasser1,Wasser2}, analogously to our use of Gaussian Mixtures fitting the latent space distribution. }

This work does not propose a new 'black-box algorithm' for generating data (although we present the new EBCAE) but analyses challenges and possible answers on how autoencoders can be turned into generative models by using well-understood tools of data modeling. 
\textcolor{black}{ One of our main findings is, that is hard to find a trade-off between out-of-bound sampling and creating new pictures. We conclude that besides a pure numerical perspective and looking at new random samples of a generative model with a latent space, the resulting image of the nearest neighbor in the latent space from the training data should be inspected. We demonstrate in our experiments that copula-based approaches may be promising alternatives to traditional modeling methods since they allow for the recombination of marginal distributions from one class with the dependence structure of another class leading to new possibilities in synthesizing images and discuss targeted sampling. Our conclusion is intended to point out relevant aspects to the user and discusses the advantages and disadvantages of the models examined. } 

The remainder of the paper is structured as follows. Section \ref{sec:latent} introduces various methods for modeling the latent space. Besides traditional approaches, copula-based methods are introduced. Section \ref{sec:exp} describes the implementation, evaluation, and results of the experiments carried out. In Section \ref{sec:discu} we discuss the results and conclude the paper. Last, we provide additional experiments and insides for interested readers in the appendix.
\section{Modeling the latent space}\label{sec:latent}
In this section, we want to introduce and reflect on different methods to model the latent space in an autoencoder (\textcolor{black}{Step 2 in Figure \ref{autoenc}}). All methods aim to fit the low-dimensional data $Y$ as best as possible to be able to create new sample points in the latent space, which leads to new realistic images after passing the decoder. We first recap more 'traditional' statistical tools, followed by copulas as an intuitive and flexible tool for modeling high-dimensional data. We briefly explain how each approach can be used to model data in the latent space and how to obtain samples thereof. Note that we do not introduce our benchmark models, namely the standard plain vanilla \textit{VAE} and the \textit{Real NVP}, and refer to the original papers instead \citep{Kingma2014,realNVP}. \textcolor{black}{Pseudocode of the overall sampling approach is given in Appendix \ref{app:pseudo} (Algorithm \ref{algo2}).}

\subsection{Traditional modeling methods}
We classify the \textit{multivariate Gaussian distribution}, a \textit{Kernel Density Estimation (KDE)}, and a \textit{Gaussian Mixture Model (GMM)} as traditional modeling methods and give a rather short treatment of each below. They are well known and can be studied in various statistics textbooks such as \citealt{hastie01statisticallearning} or \citealt{10.5555/1162264}. 

\subsubsection*{Multivariate Gaussian} 
The probably simplest method is to assume the data in the latent space to follow a multivariate Gaussian distribution. Thus, we estimate the covariance matrix $\hat{\Sigma}$ and mean vector $\hat{\mu}$ of the date $Y$. In the second step, we draw samples thereof and pass them through the decoder to generate new images. Note that this is similar to the sampling procedure in a VAE, but without forcing the latent space to be Gaussian during training.

\subsubsection*{GMM} 
The \textit{Gaussian Mixture Model (GMM)} aims to model the density of the latent space by mixing $M$ multivariate Gaussian distributions. Thus, the Gaussian mixture model has the form
\begin{align}
    f(x)=\sum_{m=1}^{M}\alpha_m \phi(x;\mu_m,\Sigma_m)
\end{align}
where $\alpha_m$ denotes the mixing parameter and $\phi$ the density of the multivariate normal distribution with mean vector $\mu_m$ and covariance matrix $\Sigma_m$. The model is usually fit by maximum likelihood using the EM algorithm. By combining several Gaussian distributions, it is more flexible than estimating only one Gaussian distribution as above. A GMM can be seen as some kind of kernel method \citep{hastie01statisticallearning}, having a rather wide kernel. In the extreme case, i.e., where $m$ equals the number of points the density is estimated on, a Gaussian distribution with zero variance is centered over each point. Kernel density estimation is introduced in the following.

\subsubsection*{KDE} \textit{Kernel Density Estimation} is a well-known non-parametric tool for density estimation. Put simply, a KDE places a density around each data point. The total resulting estimated density is constructed by 
\begin{align}
    f(x)=\frac{1}{N\lambda}\sum_{i=1}^{N}K_\lambda(x_0,x_i)
\end{align} with $N$ being the total number of data points, $\lambda$ the bandwidth, and $K$ the used kernel.  Note that the choice of bandwidth and kernel can affect the resulting estimated density. The kernel density estimation can be performed in univariate data as well as in multivariate data. In this work, we rely on the most commonly used kernel, the Gaussian Kernel, and a bandwidth fitted via \textit{Silverman's rule of thumb} \citep{nla.cat-vn375680} for the univariate KDEs (i.e. for estimating the marginal distributions of the latent space), while we use a grid search with 10-fold cross-validation in the multivariate case.\\

We use kernel density estimation in multiple manners throughout this work. First, we use a multivariate KDE to model the density of the data in the latent space itself. In the case of a Gaussian kernel, it can be written by
\begin{align}
    f(x)=\frac{1}{N \sqrt{\Sigma} 2\pi}\sum_{i=1}^{N} e^{-1/2 (x-x_i)'\Sigma^{-1}(x-x_i)}
\end{align} where $\Sigma$ represents the covariance matrix of the kernel, i.e., the matrix of bandwidths. Second, we ignore the dependence structure between margins and estimate the univariate densities of each dimension in the latent space by a KDE for each marginal distribution. In this way, we are able to find out whether explicitly modeling the dependence structure is necessary or not. We call that approach the \textit{Independent modeling approach} \textcolor{black}{also denoted short by \textit{Independent} in the following}. Last, we use univariate KDEs for modeling the marginal distributions of each dimension in the latent space and use them in the copula models described below. 

\subsection{Copula based models}
\textcolor{black}{Besides the traditional modeling methods introduced above, we apply copula based models.} 
 In the following, we first introduce copulas as a tool for high-dimensional data, which allows us to model the latent space in our application. Then, we focus on the two copula-based methods to model the latent space of the autoencoder: the \textit{vine copula} and the \textit{empirical beta copula} approach. For detailed introductions to copulas, we refer the reader to \citealt{Nelson1998,Joe,Priciples}.\\

\textit{Copulas} have been subject to an increasing interest in the \textit{Machine Learning} community over the last decades, see, e.g.,  \citealt{dimitriev2021carms,janke2021implicit,DBLP:journals/pr/MessoudiD021,ma2021copulagnn,Letizia,NEURIPS2019_5d2c2cee,kulkarni2018generative,NIPS2015_e4dd5528}. In a nutshell, copula theory enables us to decompose any $d$-variate distribution function into $d$ marginal univariate distributions and their joint dependence structure, given by the copula function. 
Thus, copulas "couple" multiple univariate distributions into one joint multivariate distribution. More formally, a $d$-variate copula $C: [0,1]^d \xrightarrow[]{} [0,1]$ is a $d$-dimensional joint distribution function whose margins are uniformly distributed on the unit interval. Decomposing and coupling distributions with copulas is formalized in Theorem \ref{theo1} going back to \citealt{Skla59}.
\begin{theorem}[\citealt{Skla59}] 
\label{theo1}
Consider a $d$-dimensional vector of random variables $\boldsymbol{Y_i}=(Y_{i,1},\dots,Y_{i,d})$ with joint distribution function  $F_{\boldsymbol{Y}}(y_i)=P(Y_1\leq y_{i,1},\dots,Y_d\leq y_{i,d})$ for $i=1,\dots,n$. The marginal distribution functions $F_j$ are defined by $F_j(y_{i,j})=P(Y_j\leq y_{i,j})$ for $ y_{i,j} \in \mathbb{R}$, $i=1,\dots,n$ and $j=1,\dots,d$.  Then, there exists a copula $C$, such that 
$$F_{\boldsymbol{Y}}(y_1,..,y_d)=C(F_1(y_1),\dots,F_d(y_d))$$ for $(y_1,\dots,y_d) \in \mathbb{R}^d$. Vice versa, using any copula $\tilde{C},$ it follows that $\tilde{F}_{\boldsymbol{Y}}(y_1,..,y_d):=\tilde{C}(F_1(y_1),\dots,F_d(y_d))$ is a proper multivariate distribution function.
\end{theorem}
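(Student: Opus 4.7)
The plan is to prove the two directions of Sklar's theorem separately. For the forward direction (existence of $C$), I would first treat the technically cleanest case in which all marginals $F_j$ are continuous, and then extend. In the continuous case the construction is explicit: set
$$C(u_1,\dots,u_d) := F_{\boldsymbol{Y}}\bigl(F_1^{-1}(u_1),\dots,F_d^{-1}(u_d)\bigr),$$
where $F_j^{-1}$ is the (generalized) quantile function. The probability integral transform gives $U_j := F_j(Y_j) \sim \mathrm{Unif}[0,1]$, so $C$ is the joint distribution function of $(U_1,\dots,U_d)$; by construction its univariate marginals are uniform, hence $C$ is a copula. Substituting $u_j = F_j(y_j)$ and using $F_j^{-1}(F_j(y_j))=y_j$ (a.s., under continuity) recovers the identity $F_{\boldsymbol{Y}}(y_1,\dots,y_d) = C(F_1(y_1),\dots,F_d(y_d))$.

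The main obstacle is the general case with possibly discontinuous $F_j$: there $F_j(Y_j)$ is no longer uniform, and $F_j^{-1}\circ F_j$ need not be the identity. My plan is to invoke the distributional transform: introduce auxiliary independent $V_1,\dots,V_d \sim \mathrm{Unif}[0,1]$ (independent of $\boldsymbol{Y}$) and define $F_j^*(y,v) := P(Y_j<y) + v\,P(Y_j=y)$. A standard lemma then yields $U_j := F_j^*(Y_j,V_j) \sim \mathrm{Unif}[0,1]$ together with $F_j^{-1}(U_j) = Y_j$ almost surely. Defining $C$ as the joint distribution function of $(U_1,\dots,U_d)$ gives a copula whose composition with the marginals reproduces $F_{\boldsymbol{Y}}$. (Uniqueness of $C$ on $\mathrm{Ran}(F_1)\times\cdots\times\mathrm{Ran}(F_d)$ follows directly, though uniqueness on $[0,1]^d$ does not hold in general.)

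For the converse, I would verify directly that $\tilde F_{\boldsymbol{Y}}(y_1,\dots,y_d) := \tilde C(F_1(y_1),\dots,F_d(y_d))$ satisfies the axioms of a $d$-variate distribution function. Three properties must be checked: (i) the correct limits as any $y_j\to-\infty$ (giving $0$, since $F_j(y_j)\to 0$ and $\tilde C$ is grounded) and as all $y_j\to\infty$ (giving $1$); (ii) right-continuity in each argument, inherited from right-continuity of $\tilde C$ and of the $F_j$; (iii) the $d$-increasing (rectangle) property, which follows because $\tilde C$ is $d$-increasing and each $F_j$ is non-decreasing, so that any rectangle in $\mathbb{R}^d$ is mapped to a rectangle in $[0,1]^d$ on which $\tilde C$ has non-negative volume. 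Finally, the marginals of $\tilde F_{\boldsymbol{Y}}$ come out as prescribed because $\tilde C(1,\dots,1,u_j,1,\dots,1) = u_j$ by the definition of a copula. The hard step throughout is the discontinuous case of the forward direction; the remaining arguments are routine verifications.
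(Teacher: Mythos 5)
The paper does not actually prove Theorem \ref{theo1}: it is quoted as a classical result with only the citation to Sklar (1959), so there is no in-paper argument to compare yours against. Judged on its own, your outline is a correct and standard route. For the forward direction with continuous marginals, the construction $C(u_1,\dots,u_d)=F_{\boldsymbol{Y}}\bigl(F_1^{-1}(u_1),\dots,F_d^{-1}(u_d)\bigr)$ works, and the caveat you flag is handled most cleanly not by arguing $F_j^{-1}(F_j(y_j))=y_j$ a.s.\ but via the adjunction $F_j^{-1}(u)\le y \iff u\le F_j(y)$, which shows the events $\{Y_j\le y_j\}$ and $\{U_j\le F_j(y_j)\}$ coincide up to null sets and hence yields the identity at every point of $\mathbb{R}^d$, not just almost everywhere. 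Your treatment of the discontinuous case via the distributional transform $F_j^*(y,v)=P(Y_j<y)+v\,P(Y_j=y)$ with auxiliary uniforms independent of $\boldsymbol{Y}$ is the modern argument (due to R\"uschendorf) and is genuinely simpler than Sklar's original extension/interpolation proof, which extends the subcopula defined on $\mathrm{Ran}(F_1)\times\cdots\times\mathrm{Ran}(F_d)$ to all of $[0,1]^d$ by multilinear interpolation; either is acceptable, and your parenthetical remark about uniqueness only on the product of the ranges is exactly right. The converse is, as you say, a routine verification; the only detail worth making explicit is that $\tilde C$, being $1$-Lipschitz as every copula is, is continuous, so right-continuity of $\tilde F_{\boldsymbol{Y}}$ follows from right-continuity of the $F_j$ alone, and the rectangle inequality follows because the $\tilde F_{\boldsymbol{Y}}$-volume of a box in $\mathbb{R}^d$ equals the $\tilde C$-volume of its (possibly degenerate) image box in $[0,1]^d$. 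I see no gaps in the plan.
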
 This allows us to construct multivariate distributions with the same dependence structure but different margins or multivariate distributions with the same margins but different couplings/pairings, i.e., dependence structures. The simplest estimator is given by the empirical copula. 
It can be estimated directly on the ranks of each marginal distribution by
\begin{align}
\label{eq:empBeta}
    \hat{C}(\mathbf{u})=\frac{1}{n}\sum_{i=1}^{n}\prod_{j=1}^{d} \mathbf{1}\biggl\{\frac{r_{i,j}^{(n)}}{n}\leq u_{j} \biggl \}
\end{align} with $\mathbf{u}=(u_1,\dots,u_d) \in [0,1]^d$ and $r_{i,j}^{(n)}$ denoting the rank of each $y_{i,j}$ within $(y_{1,j},\dots,y_{n,j})$, i.e.,
\begin{align}
    r_{i,j}^{(n)}=\sum_{k=1}^{n}\mathbf{1}\{y_{k,j}\leq y_{i,j} \}.
    \label{ranks}
\end{align} Note that $\mathbf{u}=(u_1,\dots,u_d)$ represents a quantile level, hence a scaled rank. Simultaneously, the univariate margins can be estimated using a KDE \textcolor{black}{so that the full distribution latent space is governed for}. Note that it is not possible to draw new samples from the empirical copula directly as no random process is involved. \textcolor{black}{In our applications, the latent space is typically equipped with dimensions $\geq 2.$ Although a variety of two-dimensional copula models exist, the amount of multivariate (parametric) copula models is somewhat limited. We present two solutions to this problem in the following, namely \textit{vine copulas} and the \textit{empirical beta copula}.}

\subsubsection*{Vine Copula Autoencoder}  
\textit{Vine copulas} decompose the multivariate density as a cascade of bivariate building blocks organized in a hierarchical structure. 
This decomposition is not unique, and it influences the estimation procedure of the model. Here, we use \textit{regular-vine (r-vine)} models \cite{Czado2019AnalyzingDD,Joe} to model the 10, 20 and 100 dimensional latent space of the autoencoders at hand. An r-vine is built of a sequence of linked trees $T_i = (V_i, E_i)$, with nodes $V_i$ and edges $E_i$ for $i=1,\dots,d-1$ and follows distinct construction rules which we present in Appendix \ref{app:vine}.\\

The $d$-dimensional copula density can then be written as the product of its bivariate building blocks:
\begin{align}
    c(u_1,\dots,u_d)=\prod_{i=1}^{d-1}\prod_{e \in E_i}c_{a_eb_e;D_e}(u_{a_e|D_e},u_{b_e|D_e})
\end{align} with conditioning set $D_e$ and conditional probabilities, e.g.,  $u_{a_e|D_e}=\mathbb{P}(U_{a_e}\leq u_{a_e}|D_e)$. \textcolor{black}{The conditioning set $D_e$ includes all variables conditioned on at the respective position in the vine structure (see Appendix \ref{app:vine}). For each resulting two-dimensional copula of conditional variables, any parametric or non-parametric copula model (as done by \citealt{Vatter2019}) can be chosen.} However, the construction and estimation of vine copulas is rather complicated. Hence, assuming independence for seemingly unimportant building blocks, so-called truncation, is regularly applied. Because of this, truncated vine copula models do not capture the complete dependence structure of the data, and their usage is not underpinned by asymptotic theory. We refer to \cite{Czado2019AnalyzingDD,doi:10.1146/annurev-statistics-040220-101153,econometrics4040043} for reviews of vine copula models.
\subsubsection*{Empirical Beta Copula Autoencoder} 
The \textit{empirical beta copula} \citep{SEGERS201735} avoids \textcolor{black}{the problem of }choosing a single, parametric multivariate copula model due to its non-parametric nature. Further, \textcolor{black}{and in contrast to the presented vine copula approach}, it offers an easy way to model the full, non-truncated multivariate distribution based on the univariate ranks of the joint distribution and, thus, seems to be a reasonable choice to model the latent space. The empirical beta copula is closely related to the empirical copula (see Formula \ref{ranks}) and is a crucial element of the Empirical-Beta-Copula Autoencoder. It is solely based on the ranks $r_{i,j}^{(n)}$ of the original data $\mathbf{Y}$ and can be interpreted as a continuous counterpart of the empirical copula. It is defined by 
\begin{align}
    &C^\beta=\frac{1}{n}\sum_{i=1}^{n}\prod_{j=1}^{d} F_{n,r_{i,j}^{(n)}}(u_j) 
\end{align} for $\mathbf{u}=(u_1,\dots,u_d) \in [0,1]^d$, where \begin{align}
    F_{n,r_{i,j}^{(n)}}(u_j)&=P(U_{(r_{i,j}^{(n)})}\leq u_j)\\&=\sum_{p=r_{i,j}^{(n)}}^{n}\binom{n}{p}u_j^p(1-u_j)^{(n-p)}
\end{align} is the cumulative distribution function of a \textit{beta distribution}, i.e.,  $\mathbb{B}(r_{i,j}^{(n)},r_{i,j}^{(n)}+n-1)$. As $r_{i,j}$ is the rank of the $i$\textsuperscript{th} element in dimension $j$,  $U_{(r_{i,j}^{(n)})}$ represents the $r_{i,j}$\textsuperscript{th} order statistic of $n$ i.i.d. uniformly distributed random variables on $[0,1]$. For example, if the rank of the $i$\textsuperscript{th} element in dimension $j$ is 5, $U_{(r_{i,j}^{(n)})}=U_{(5)^{(n)}}$ denotes the $5$\textsuperscript{th} order statistic on $n$ i.i.d. uniformly distributed random variables.

The intuition behind the empirical beta copula is as follows: Recall that the marginal distributions of a copula are uniformly distributed on $[0,1]$ and, hence, the $k$\textsuperscript{th} smallest value of scaled ranks $r_{i,j}^{(n)}/n$ corresponds to the $k$\textsuperscript{th} order statistic $U_{(k)}$. Such order statistics are known to follow a \textit{beta distribution} $\mathbb{B}(k,k+n-1)$ \citep{Orderstatistics}. Consequently, the mathematical idea of the empirical beta copula is to replace each indicator function of the empirical copula with the cumulative distribution function of the corresponding rank $r_{i,j}^{(n)}$. 

We argue that the empirical beta copula can be seen as the naturally extended version of the empirical copula, thus, it seems to be a good choice for dependence modeling. \citealt{SEGERS201735} further demonstrates that the empirical beta copula outperforms the empirical copula both in terms of bias and variance. A theorem stating the asymptotic behavior of the empirical copula is given in Appendix \ref{app:beta}.

Synthetic samples in the latent space $y'$ are created by reversing the modeling path. First, random samples from the copula model $\mathbf{u}=(u_1,\dots, u_d)$ are drawn. Then, the copula samples are transformed back to the natural scale of the data by the inverse probability integral transform of the marginal distributions, i.e., $y_j'=\hat{F}_j(u_j)$, where $\hat{F}_j$ is the estimated marginal distribution and $u_j$ the $j$th element of the copula sample for $j\in\{1,\dots,d\}$. Algorithm \ref{algo1} summarizes the procedure. %
\begin{algorithm}[h]
\small
\KwIn{Sample $Y \subset \mathbb{R}^{n\times d}$, new sample size $m$}
\Begin{
  Compute rank matrix $R^{n\times d}$ out of $Y$\\
 \noindent Estimate marginals of $Y$ with KDE, $\widehat{f}_1(y_1),\dots,\widehat{f}_d(y_d).$\\ 
  \For{$i \leq m$}{
    Draw random from $I\in [1,\dots,n]$ \\
    \For{$j \leq d$}{
        Draw $u_{I,j} \sim \mathbb{B}(R_{Ij},n+1-R_{Ij})$\\
    }
    Set $u_i = (u_{I1},\dots,u_{Id})$\\
    Rescale margins by
    $Y_i = \widehat{F}^{-1}_1(u_{i1}),\dots,\widehat{F}^{-1}_d(u_{id}).$   }} 
\KwOut{New sample $Y'$ of size m} 
\caption{Sampling from Empirical Beta Copula}
\label{algo1}
\end{algorithm}
\FloatBarrier
\textcolor{black}{We now present the experiments and results of a comparative study including all mentioned methodologies to model the latent space in the next section.}
\section{Experiments}
\label{sec:exp}
In this section, we present the results of our experiments. We use the same architecture for the autoencoder in all experiments for one dataset but replace the modeling technique for the latent space for all algorithms. The architecture, as well as implementation details, are given in Appendix \ref{app:imple}. We further include a standard VAE and the Real NVP normalization flow approach modeling the latent space in our experiments to serve as a benchmark.
\begin{figure*}[htb]
\centering
\begin{tikzpicture}
 
    \node (image) at (0,0) {
  \centering
 \includegraphics[width=.39\textwidth]{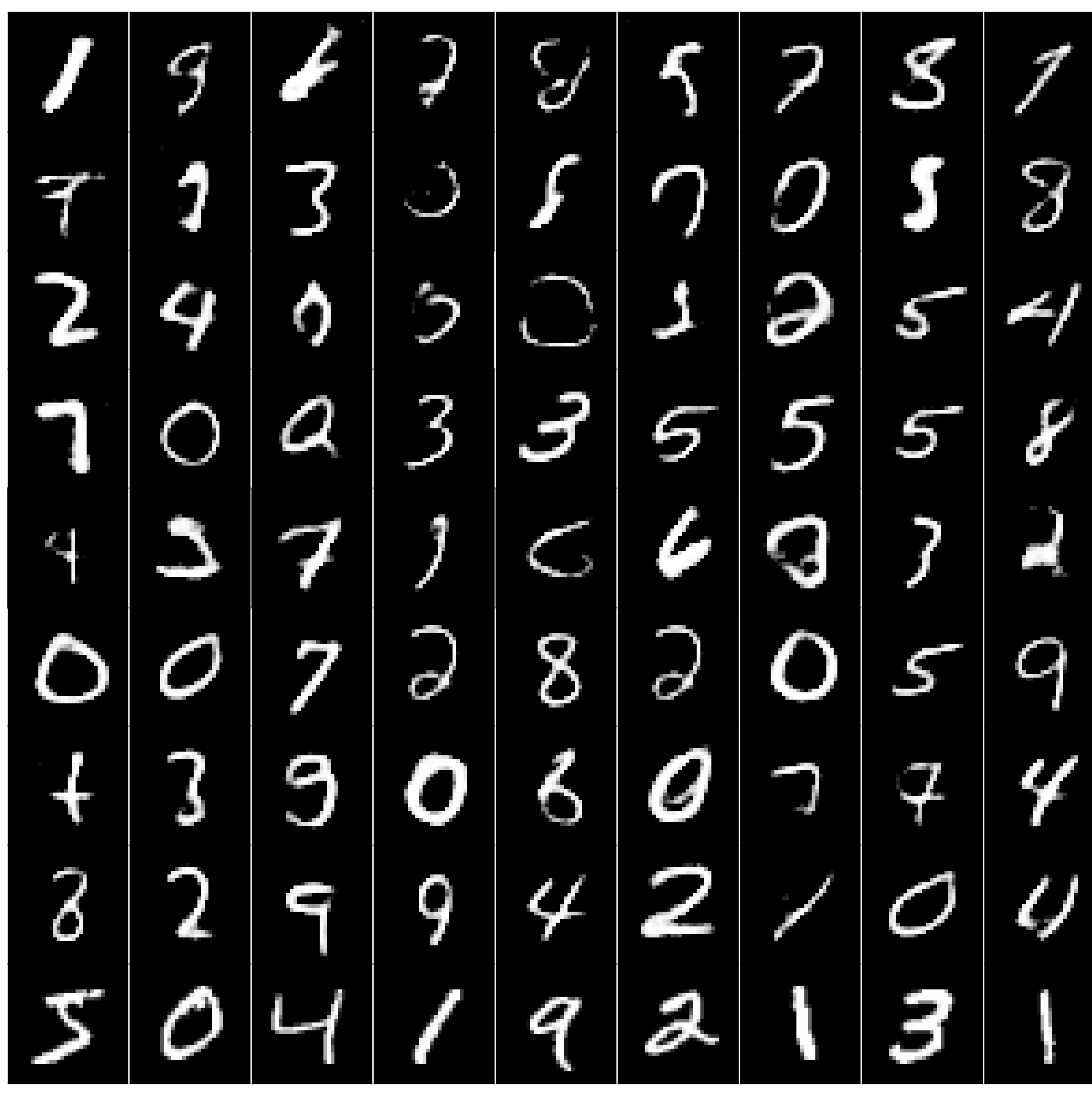}
    };
      \node (image) at (7.3,0) {  
 \includegraphics[width=.39\textwidth]{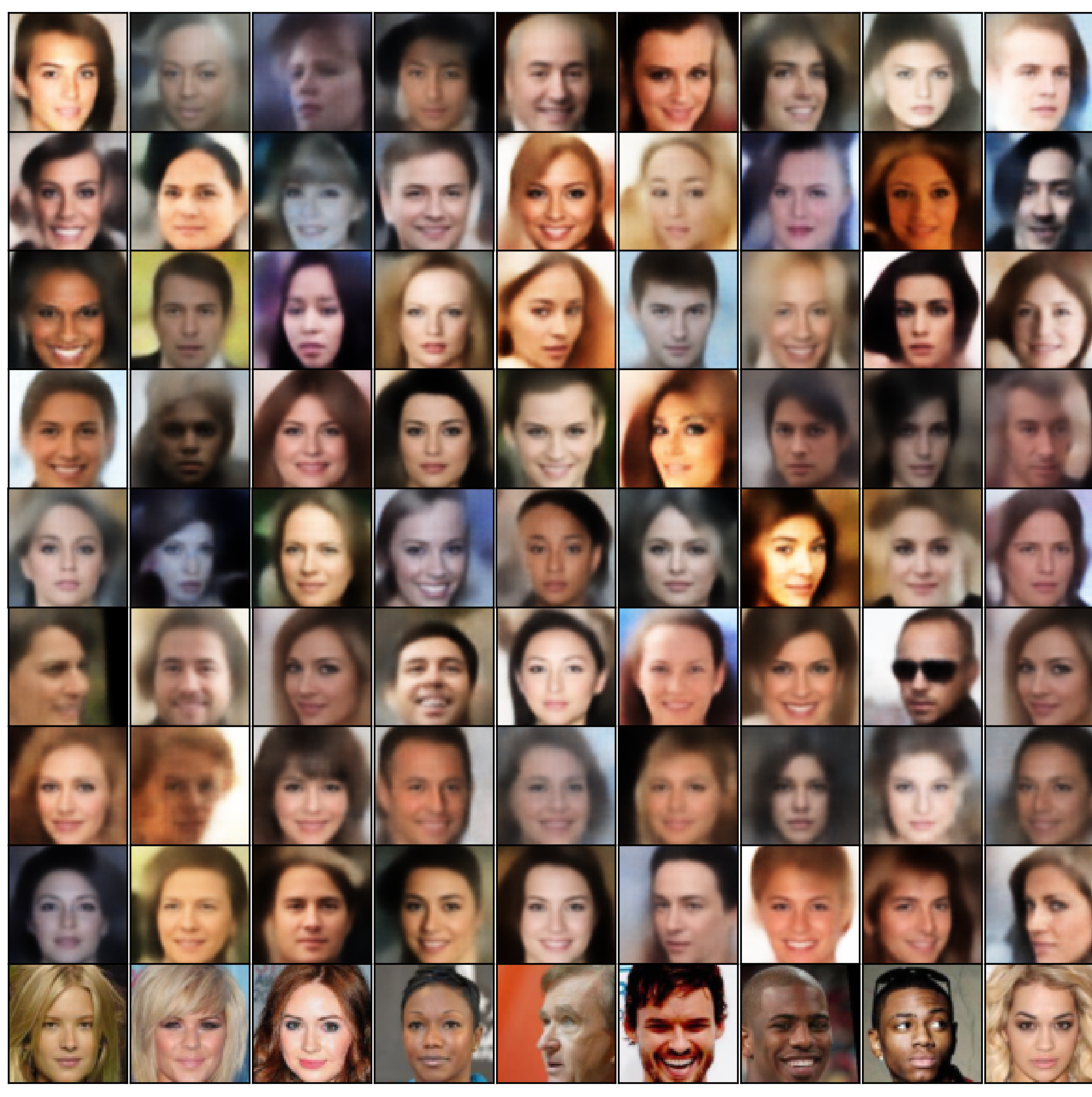}
    };

    \draw[right] (10.5, 2.8) node {\scriptsize{Gaussian Distribution}};
    \draw[right] (10.5, 2.1) node {\scriptsize{Independent}};
    \draw[right] (10.5, 1.4) node {\scriptsize{Multivariate KDE}};
    \draw[right] (10.5, 0.7) node {\scriptsize{GMM}};
    \draw[right] (10.5, -0.0) node {\scriptsize{Vine Copula}};
    \draw[right] (10.5, -0.7) node {\scriptsize{EBC}};
    \draw[right] (10.5, -1.4) node {\scriptsize{VAE}};
    \draw[right] (10.5, -2.1) node {\scriptsize{Real NVP}};
    \draw[right] (10.5, -2.8) node {\scriptsize{Original}};
 
\end{tikzpicture}
\caption{Comparison of random, synthetic samples of different Autoencoder models row by row for MNIST (left) and CelebA (right). Original input samples are given in the last row.}
\label{VGL}
\end{figure*}
\subsection{Setup}
\textcolor{black}{We first describe the overall methodology and the usage of the methods proposed in Section \ref{sec:latent}. We then introduce the used data sets and evaluation framework.}
\subsubsection*{Methodology} 
We train an autoencoder consisting of two neural nets, an \textit{encoder} $f$, and a \textit{decoder} $g$. The  encoder $f$ maps data $X$ from the original space to a lower-dimensional space, while the decoder $g$ reconstructs this low-dimensional data $Y$ from the low-dimensional latent space to the original space (see Fig. \ref{autoenc}). We train both neural nets in a way that the reconstruction loss is minimized, i.e., that the reconstructed data $X'=g(f(X))$ is as similar to the original data $X$ as possible. In the second step, we model the latent space $Y$ data with a multivariate Gaussian distribution, a Gaussian mixture model, Kernel density estimates, the two presented copula methods and the Real NVP. Thus, we fit models with different flexibility and complexity while keeping the training process of the autoencoder untouched. Last, new samples are generated by decoding random samples from the learned model in the latent space. Note that such an approach is only reasonable when the underlying autoencoder has learned a relevant and interesting representation of the data and the latent space is smooth. We demonstrate this in Appendix \ref{app:interpo}.

\subsubsection*{Datasets}
We conduct experiments on one small-scale, one medium, and one large-scale dataset. The small-scale \textit{MNIST} dataset \citep{lecun2010mnist} includes binary images of digits, while the medium-scale \textit{SVHN} dataset \citep{37648} contains images of house numbers in Google Street View pictures. The large-scale \textit{CelebA} dataset \citep{liu2015faceattributes} consists of celebrity images covering 40 different face attributes. We split data into a train set and a test set of 2000 samples which is a commonly used size for evaluation \citep{Vatter2019,xu2018empirical}.
Note that the data sets cover different dimensionalities in the latent space, allowing for a throughout assessment of the methods under investigation.

\subsubsection*{Evaluation}
Evaluation of results is performed in several ways. First, we visually compare random pictures generated by the models. Second, we evaluate the results with the framework proposed by \citealt{xu2018empirical}, since a log-likelihood evaluation is known to be incapable of assessing the quality \citep{Theis2016a} and unsuitable for non-parametric models. Based on their results, we choose five metrics in our experiments: The \textit{earth mover distance (EMD)}, also known as \textit{Wasserstein distance} \citep{Vallender1974CalculationOT}; the \textit{mean maximum discrepancy (MMD)} \citep{NIPS2006_e9fb2eda}; the \textit{1-nearest neighbor-based two-sample test (1NN)}, a special case of the classifier two-sample test \citep{DBLP:conf/iclr/Lopez-PazO17}; the \textit{Inception Score} \citep{NIPS2016_8a3363ab}; and the \textit{Fr\^{e}chet inception distance} \citep{DBLP:conf/nips/HeuselRUNH17} (the latter two over ResNet-34 softmax probabilities). In line with \citealt{Vatter2019} and as proposed by \citealt{xu2018empirical}, we further apply the EMD, MMD, and 1NN over feature mappings in the convolution space over ResNet-34 features. For all metrics except the Inception Score, lower values are preferred. For more details on the metrics, we refer to \citealt{xu2018empirical}.
Next, we evaluate the ability to generate new, realistic pictures by the different latent space modeling techniques. Therefore, we compare new samples with their nearest neighbor in the latent space stemming from the original data. This shows us whether the learned distribution covers the whole latent space, or stays too close to known examples, i.e., the model does not generalize enough.
Finally, we compare other features of the tested models, such as their ability of targeted sampling and of recombining attributes.

\subsection{Results}
\textcolor{black}{In the following, we show results for our various experiments. First, we present visual results for each of the methods investigated to gain a qualitative understanding of their differences. Second, we compare the methods in terms of performance metrics. Third, we evaluate the latent space and nearest neighbots in the latent space. Finally, we address computing times and discuss targeted sampling and recombination of image features.}

\subsubsection*{Visual Results}
Figure \ref{VGL}  shows images generated from each method for MNIST and CelebA. The GMM model is composed of 10 elements, and the KDE is constructed using a Gaussian kernel with a bandwidth fitted via a grid search and 10-fold cross-validation. The specification of the Real NVPs are given in the Appendix.
\begin{figure*}[htb]
		\centering
		\includegraphics[width=1\linewidth]{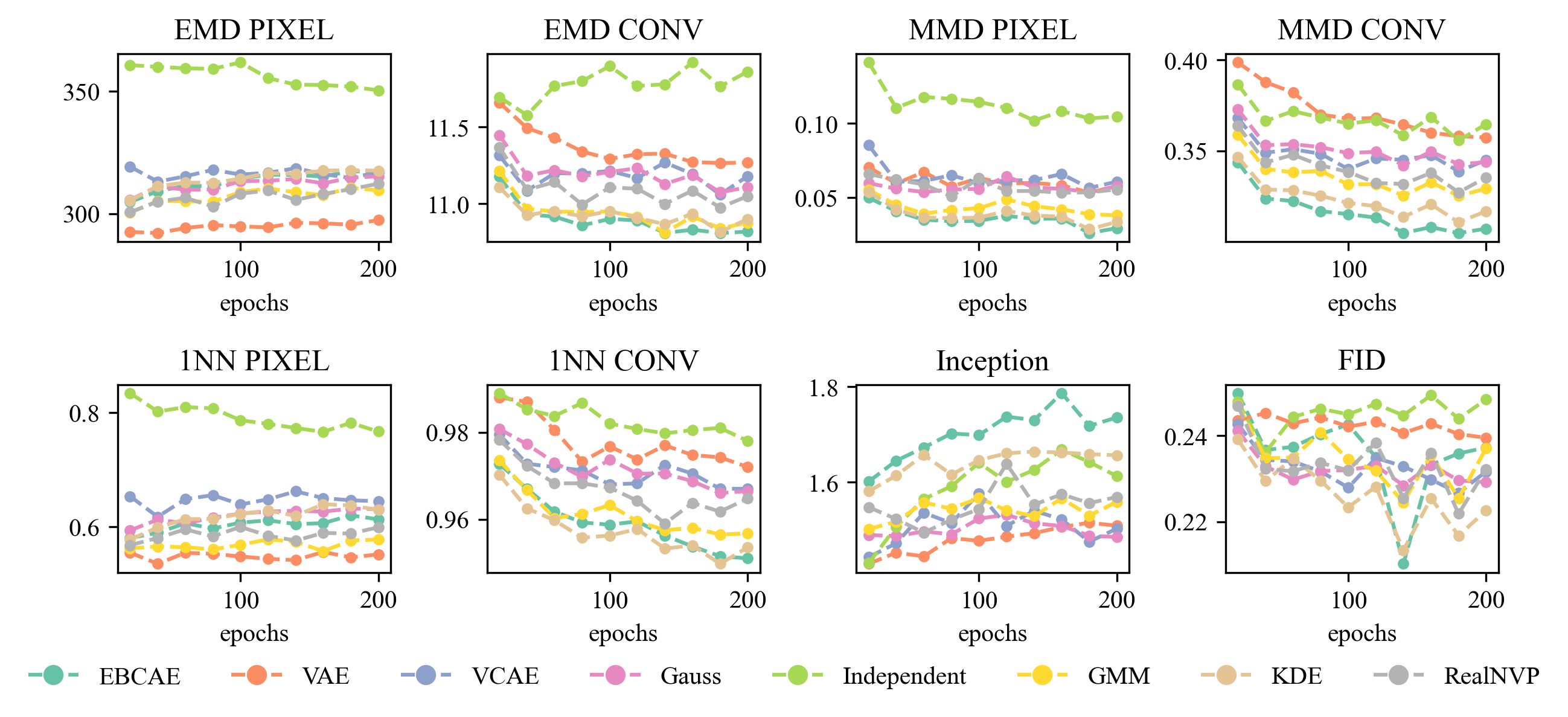}
	\caption{Performance metrics of generative models on \textbf{CelebA}, reported over epochs computed from 2000 random samples. Note that they only differ in the latent space sampling and share the same autoencoder. }
	\label{num:CelebA}
\end{figure*}

For the MNIST dataset, we observe the best results for the EBCAE (row 6) and KDE (row 3), while the other methods seem to struggle a bit. For the CelebA, our visual observations are slightly different. All methods produce images that are clearly recognizable as faces. However, the Gaussian samples in row 1 and independent margins in row 2 create pictures with some unrealistic artefacts, blurry backgrounds, or odd colors. This is also the case for the GMM in row 4 and VCAE in row 5, but less severe. We believe that this comes from samples of an empty area in the latent space, i.e., where none of the original input pictures were projected to. In contrast to that, the samples in the latent space of the KDE, EBCAE, and Real NVP stay within these natural bounds, producing good results after passing the decoder (rows 3, 6, 8). Recall that all methods use the same autoencoder and only differ by means of sampling in the latent space. 
From our observations, we also conclude that the autoencoder for the CelebA dataset is less sensitive toward modeling errors in the latent space since all pictures are clearly recognizable as faces. In contrast, for the MNIST dataset, not all images clearly show numbers. Similar results for SVHN are presented in the Appendix.

\subsubsection*{Numerical Results}
The numerical results computed from 2000 random samples displayed in Figure \ref{num:CelebA} prove that dependence truly matters within the latent space. Simultaneously, the KDE, GMM, and EBCAE perform consistently well over all metrics, delivering comparable results to the more complex Real NVP. Especially the EBCAE outperforms the other methods, whereas the VCAE, Gauss model, and VAE usually cluster in the middle.

We further report results over the number of samples in the latent space in Figure \ref{num:CelebA1} in the Appendix. This, at first sight, unusual perspective visualizes the capability to reach good performance even for small sample sizes in latent space. In a small-sample regime, it is crucial to assess how fast a method adapts to data in the latent space and models it correctly. We see that all methods perform well for small sample sizes, i.e., $n=200.$ Similar experiments for MNIST and SVHN can be found in Appendix \ref{sec:add}.

\subsection*{Nearest Neighbour and Latent Space Evaluation }
Next, we evaluate the different modeling techniques in their ability to generate new, realistic images. For this, we focus on pictures from the CelebA dataset in Figure \ref{VGL_neighbor}. First, we create new, random samples with the respective method (top row) and then compare these with their decoded nearest neighbor in the latent space (middle row). The bottom row displays the latent space nearest neighbor in the original data space before applying the autoencoder. By doing so, we are able to disentangle two effects. First, the effect from purely encoding-decoding an image and, second, the effect of modeling the latent space. Thus, we can check whether new images are significantly different from the input, i.e., whether the distribution modeling the latent space merely reproduces images or generalizes to some extent.

We observe that the samples from GMM, VCAE and the Real NVP substantially differ from their nearest neighbors. However, again they sometimes exhibit unrealistic colors and blurry backgrounds. The samples created from KDE and EBCAE look much more similar to their nearest neighbors in the latent space, indicating that these methods do not generalize to the extent of the other methods. 
However, their samples do not include unrealistic colors or features and seem to avoid sampling from areas where no data point of the original data is present. Thus, they stay in 'natural bounds'. Note that this effect apparently is not reflected in the numerical evaluation metrics. We, therefore, recommend that, in addition to a quantitative evaluation, a qualitative evaluation of the resulting images should always be performed. 
\begin{figure*}[htb]
\centering
\begin{subfigure}{.25\textwidth}
  \centering
  \includegraphics[width=.95\linewidth]{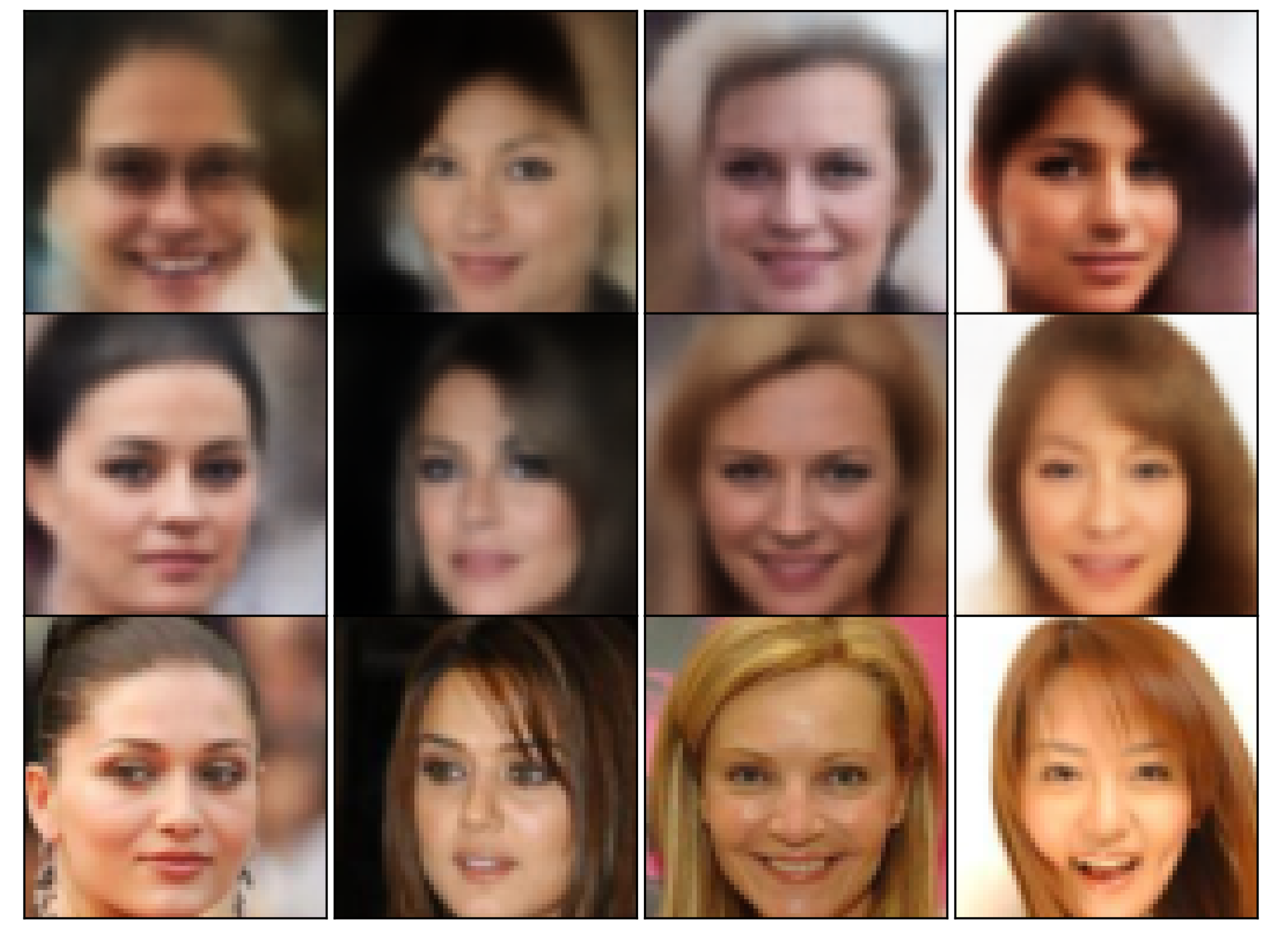}
  \caption{Gaussian}
  \label{fig:sub1}
\end{subfigure}%
\begin{subfigure}{.25\textwidth}
  \centering
  \includegraphics[width=.95\linewidth]{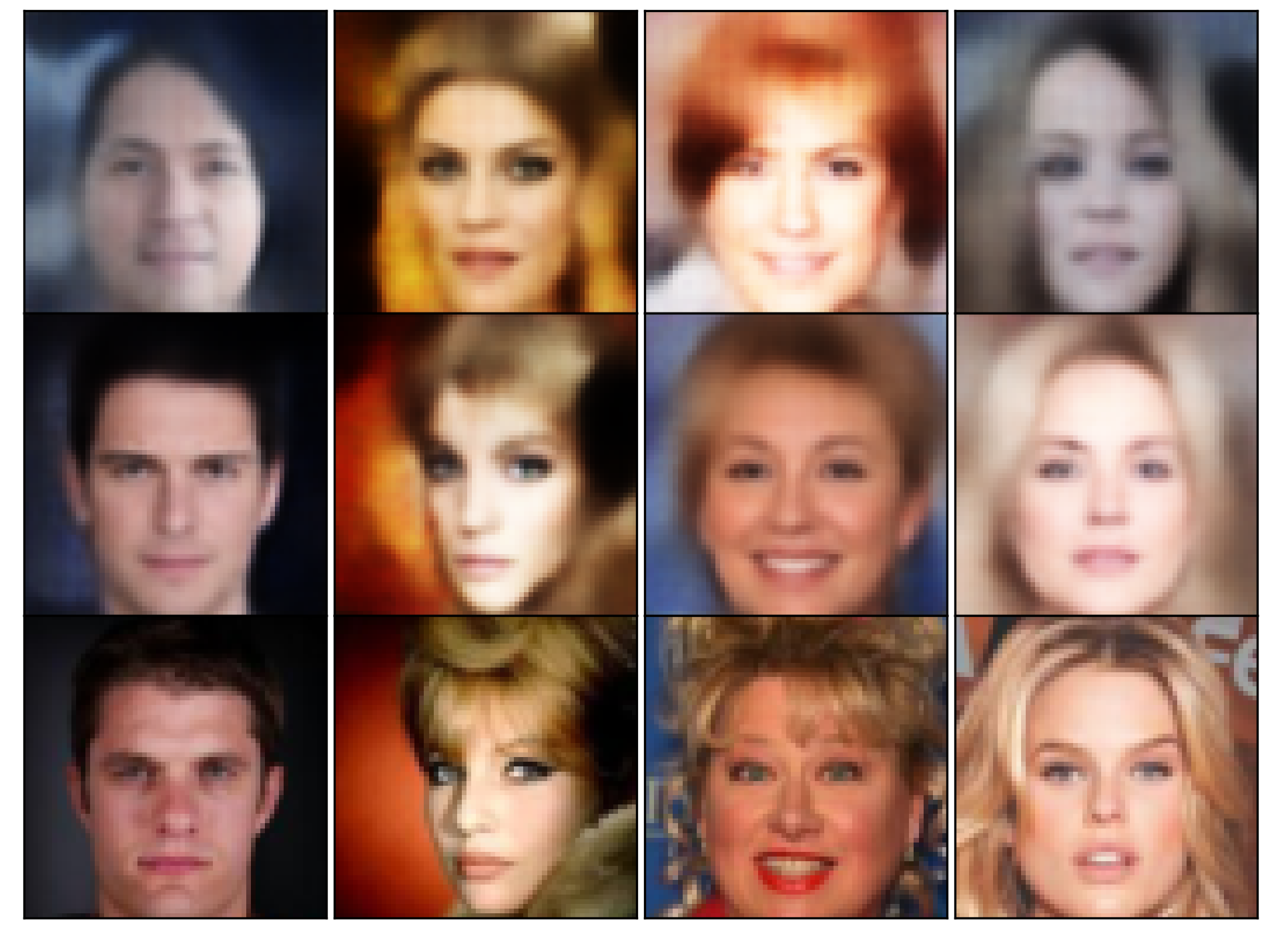}
  \caption{Independent}
\end{subfigure}%
\begin{subfigure}{.25\textwidth}
  \centering
  \includegraphics[width=.95\linewidth]{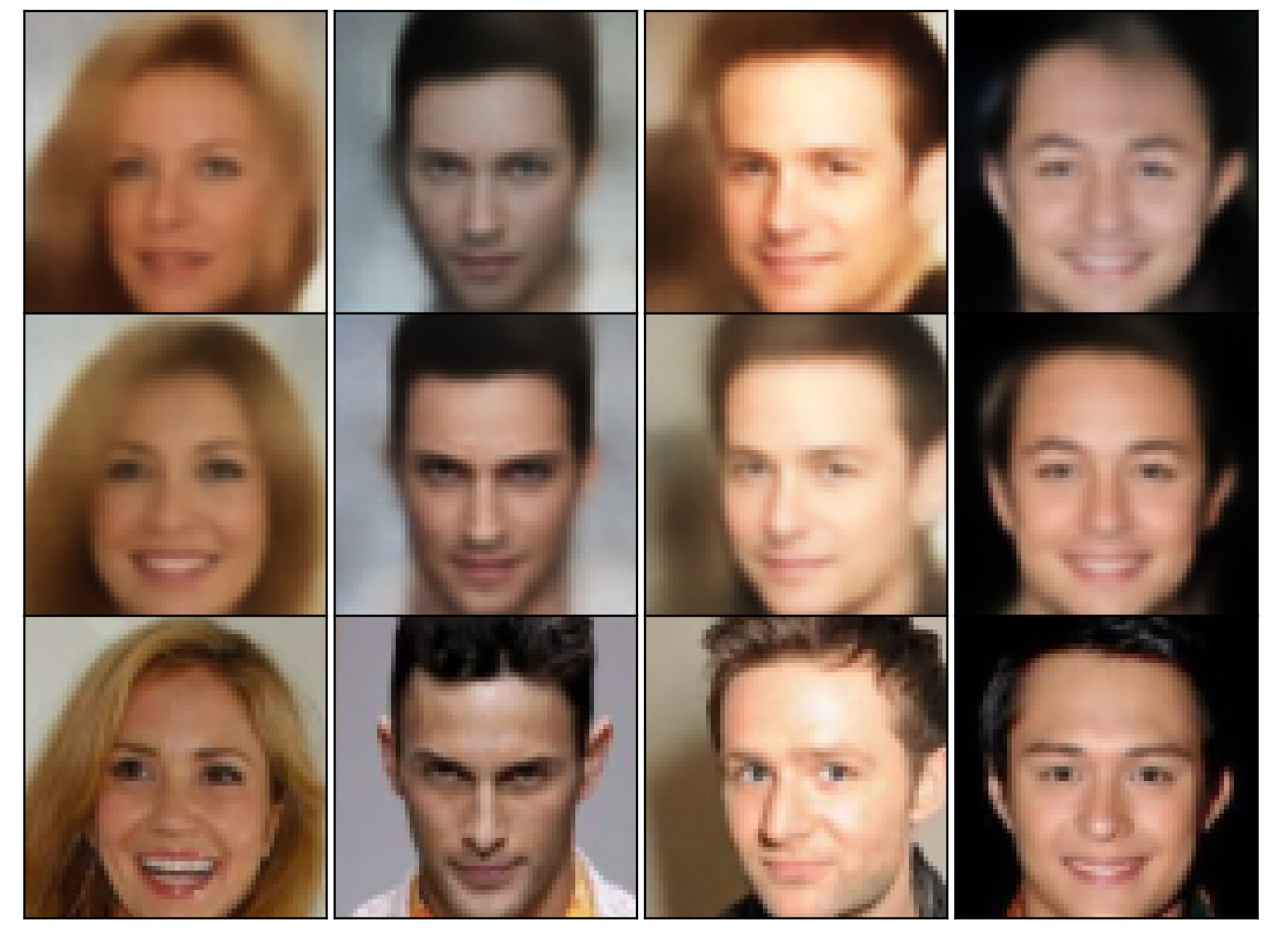}
  \caption{KDE}
\end{subfigure}%
\begin{subfigure}{.25\textwidth}
  \centering
  \includegraphics[width=.95\linewidth]{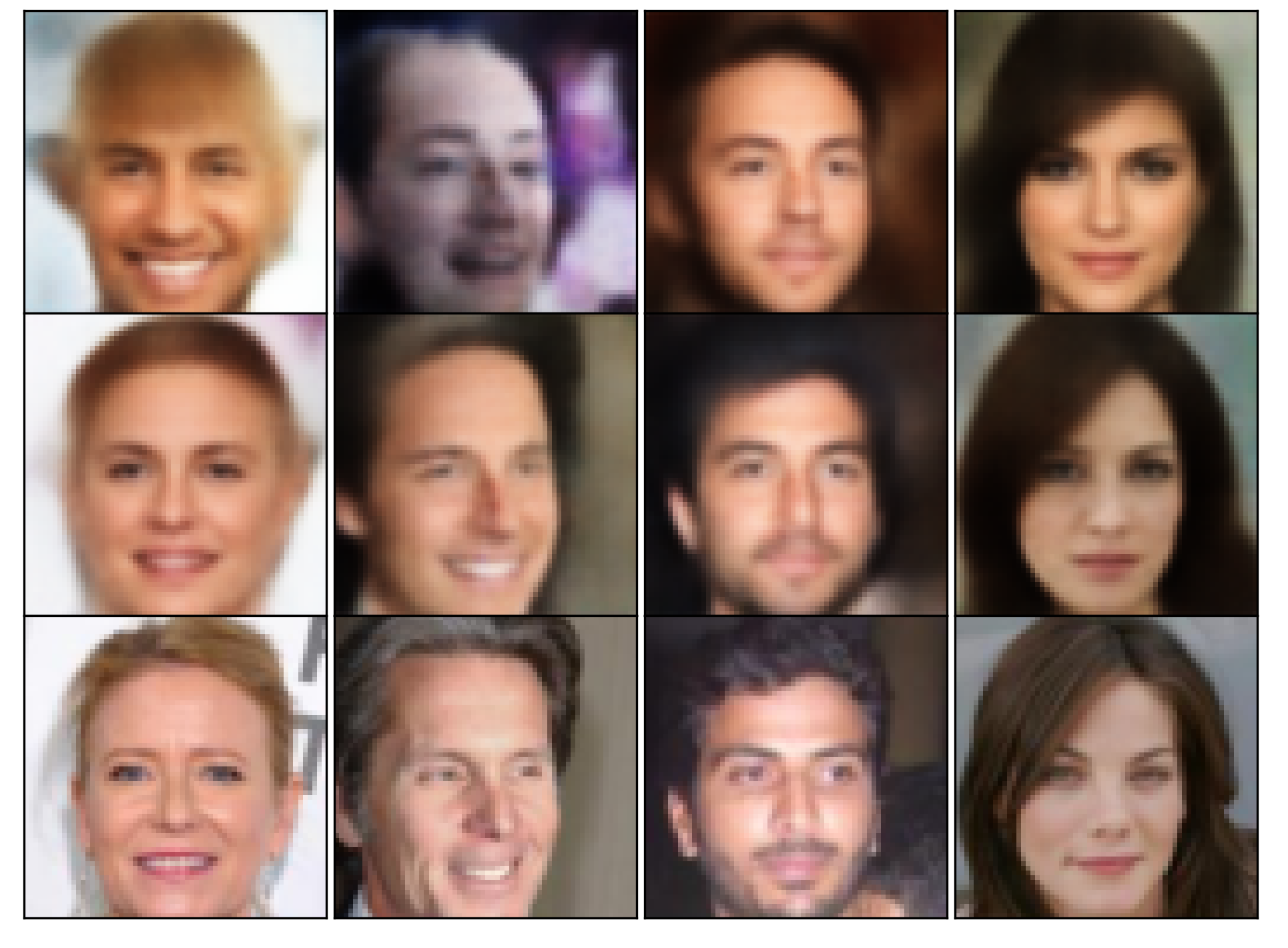}
  \caption{Real NVP}
\end{subfigure}

\begin{subfigure}{.25\textwidth}
  \centering
  \includegraphics[width=.95\linewidth]{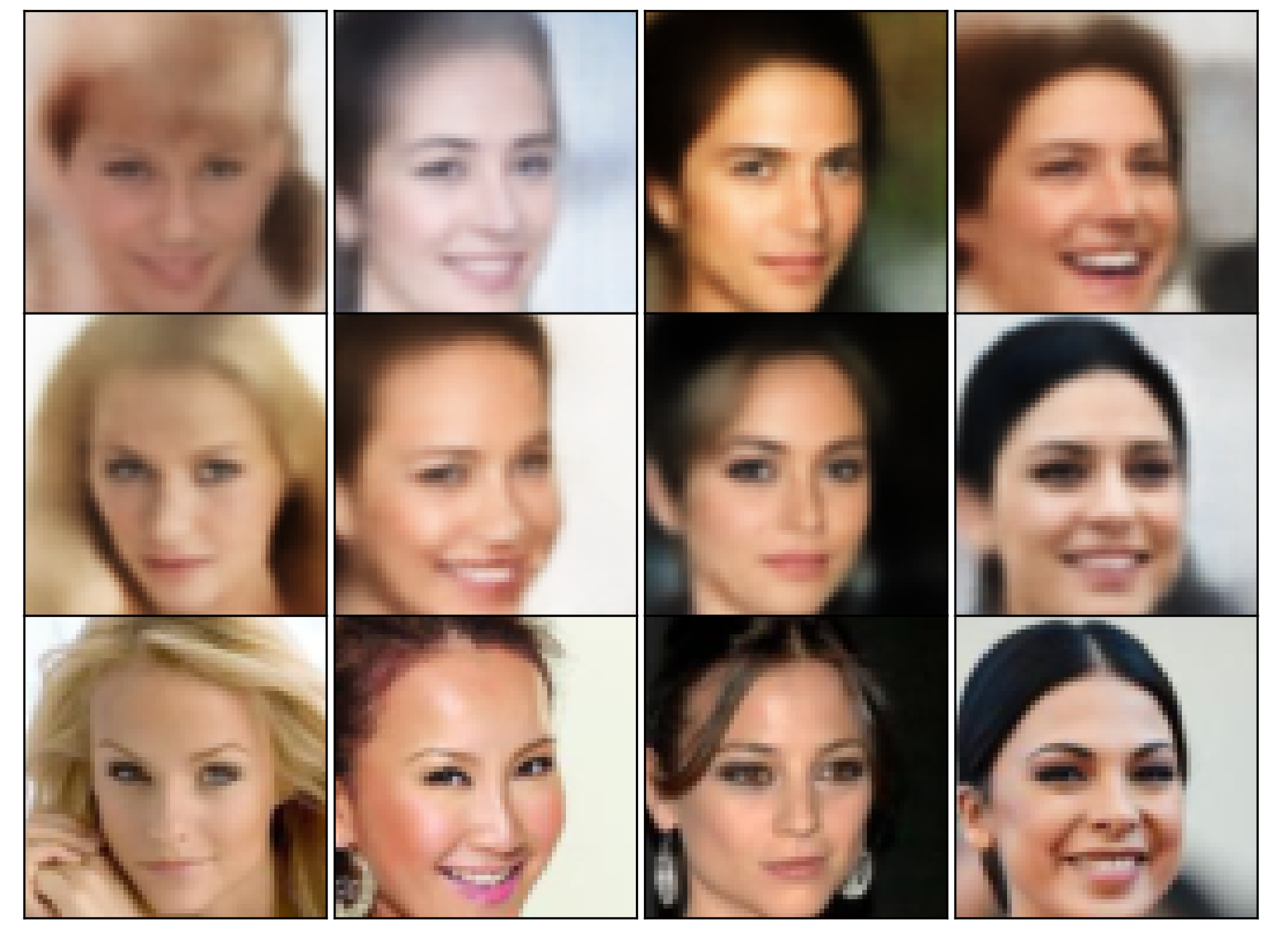}
  \caption{GMM}
\end{subfigure}%
\begin{subfigure}{.25\textwidth}
  \centering
  \includegraphics[width=.95\linewidth]{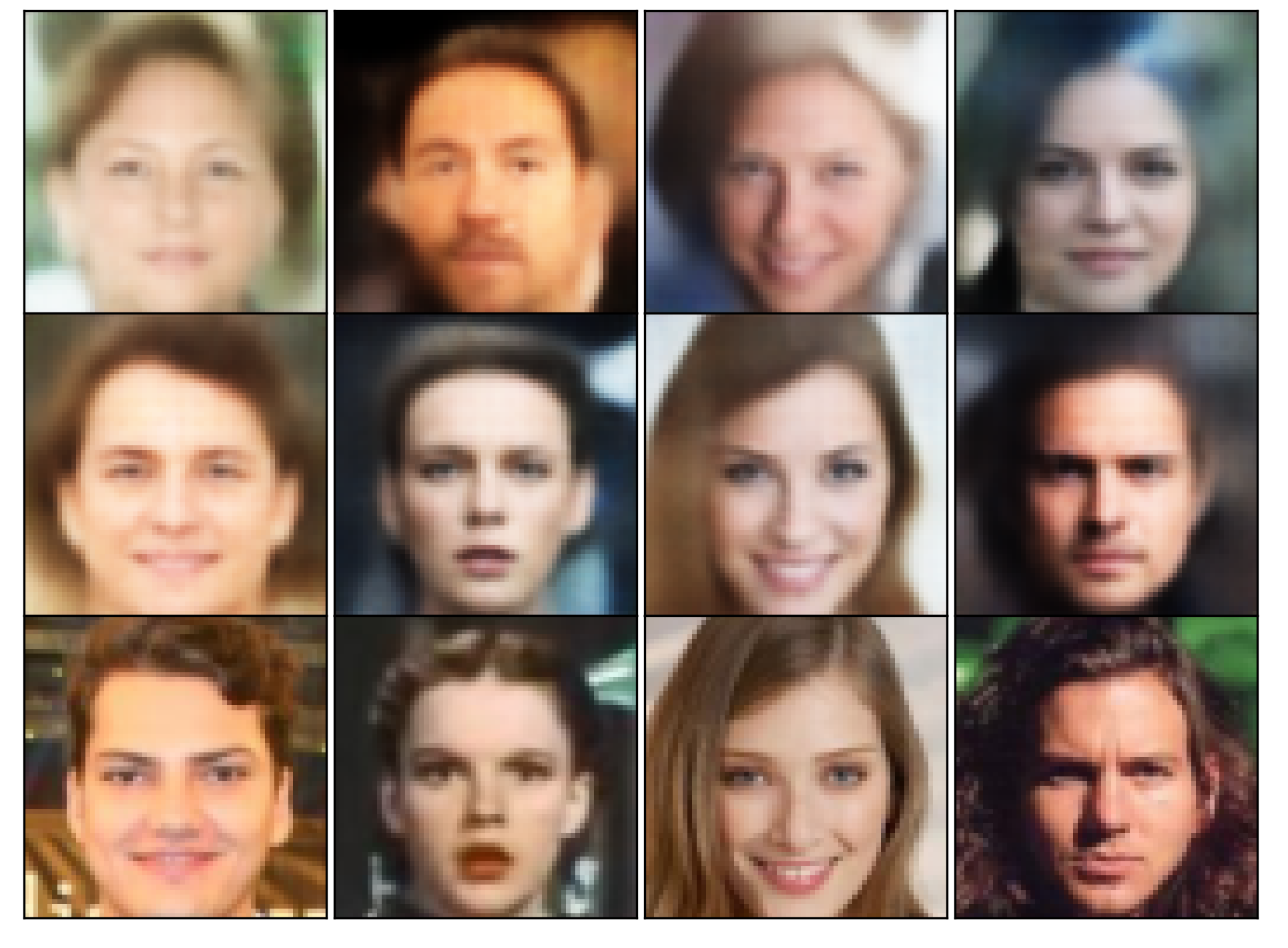}
  \caption{VCAE}
\end{subfigure}%
\begin{subfigure}{.25\textwidth}
  \centering
  \includegraphics[width=.95\linewidth]{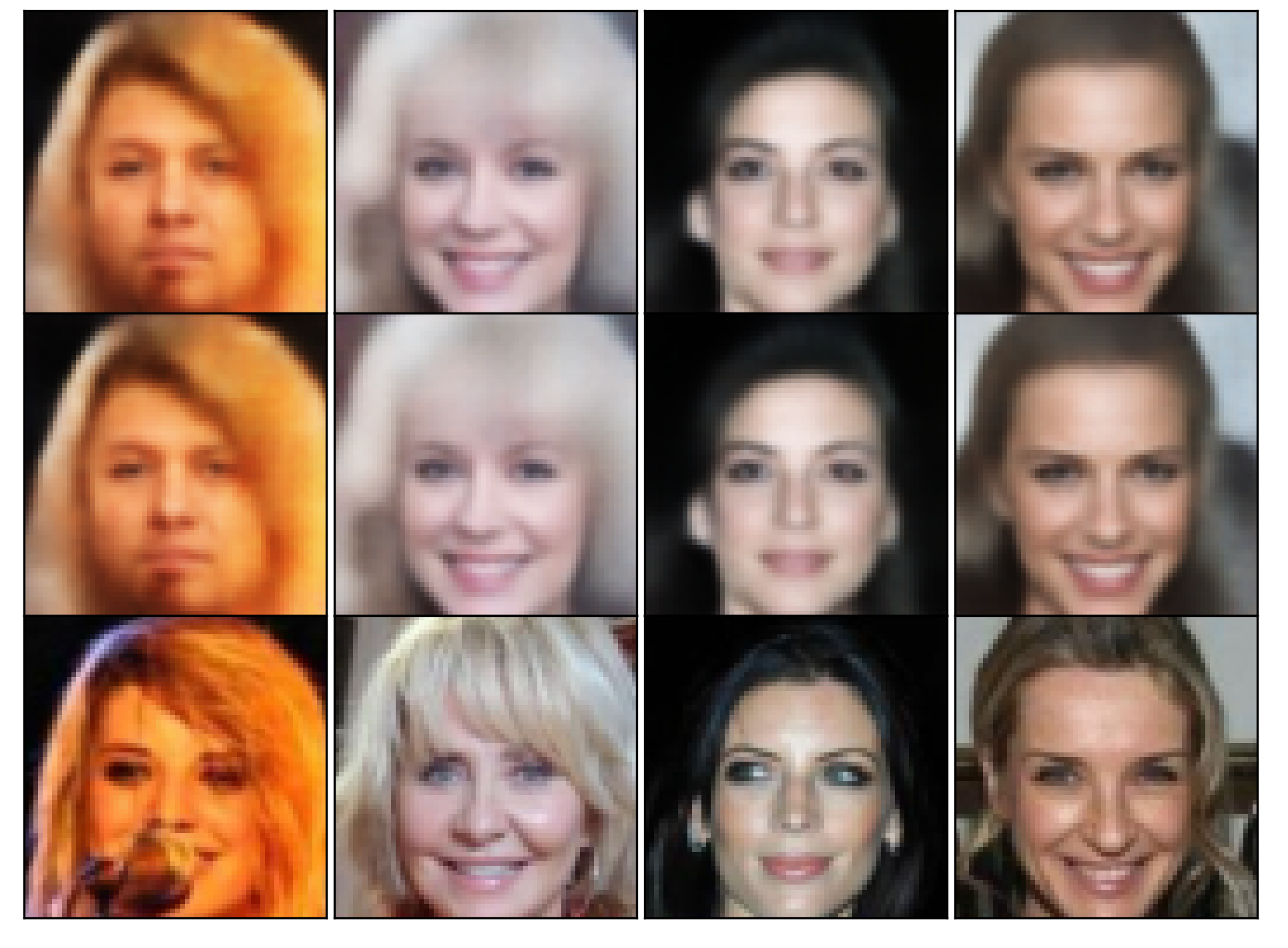}
  \caption{EBCAE}
\end{subfigure}
\caption{\textcolor{black}{Nearest neighbor evaluation of the six investigated modeling methods after decoding. \textbf{Top row:} Newly generated images. \textbf{Middle row:} Nearest neighbor of new image in the latent space of training samples after decoding. \textbf{Bottom row:} Original input training picture of nearest neighbor in latent space.  }}
\label{VGL_neighbor}
\end{figure*}

To further underpin this point, Figure \ref{TSNE} shows  2-dimensional TSNE-Embeddings (see, e.g.,\citealt{JMLR:v9:vandermaaten08a}) of the latent space for all six versions of the autoencoder (MNIST). Black points indicate original input data, and colored points are synthetic samples from the corresponding method. We see that the KDE, as well as the EBCAE, stay close to the original space. The samples from the GMM and Real NVP also seem to closely mimic the original data, whereas the other methods fail to do so. This visualization confirms our previous conjecture that some algorithms tend to sample from 'empty' areas in the latent space, leading to unrealistic results. 
\begin{figure*}[htb]
		\centering
		\includegraphics[width=1\linewidth]{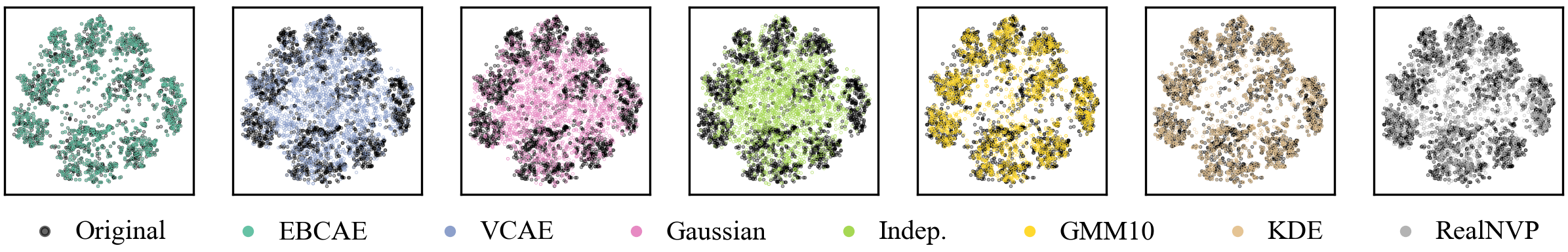}
	\caption{TSNE embeddings of samples in the latent space of the \textbf{MNIST} dataset. Points from the original input training data $Y$ are given in black, whereas new, synthetic samples $Y'$ in the latent space stemming from the different modeling methods are colored. }
	\label{TSNE}
\end{figure*}

\subsection*{Computing Times, Targeted Sampling and Recombination }
We also report computing times for learning and sampling of the different models for MNIST and CelebA in Table \ref{tab:compute}. Unsurprisingly, the more straightforward methods such as Gauss, Independence, KDE, and GMM, exhibit the lowest sampling times. The Real NVP shows the highest learning time as a neural network is fitted. However, we expect the difference to be much smaller once trained on an appropriate GPU. The times also reflect the complexities of the methods in the latent space dimensions.  
 \begin{table}[htb]
    \centering
    \caption{Modeling and sampling time in the \textbf{CelebA} and \textbf{MNIST} dataset of 2000 artificial samples based on a latent space of size $n=2000$ in [s]. }
    \begin{tabular}{lrrrr}
   & \textbf{CelebA} & \textbf{CelebA} & \textbf{MNIST} &\textbf{MNIST} \\
  Method &Learn &Sample&Learn &Sample\\\hline
  Gauss     &$ <$0.01     & 0.01     & 0.002     & 0.002 \\
  Indep.    & 4.10      &0.07       & 0.393     &0.003  \\
  KDE       & 75.25     & 0.01      & 13.958    & 0.001 \\
  GMM       &  1.35     & 0.03      & 0.115     & 0.004 \\
  VCAE      & 306.97    & 148.48    & 10.345    & 4.590 \\
  EBCAE     & 3.41      & 59.36     & 0.328     & 5.738 \\
  Real NVP  & 2541.19   &3.69       &341.608    & 0.477 \\ \hline
     
    \end{tabular}
    \label{tab:compute}
\end{table}

Last, we discuss other features of the tested methods, such as targeted sampling and recombination. In contrast to the other techniques, the KDE and EBCAE allow for targeted sampling. Thus, we can generate new images with any desired characteristic directly, e.g., only ones in a data set of images of numbers. In the case of the KDE, this simply works by sampling from the estimated density of the corresponding sub-group. In the case of the EBCAE, we randomly choose among rows in the rank matrix of original samples that share the desired specific attribute\textcolor{black}{, i.e., we sample $I$ in the first for-loop in Algorithm \ref{algo1} conditional on the sub-group. Thus, newly generated samples stay close to the original input and therefore share the same main characteristics.} Other approaches are also possible, however, they need further tweaks to the model, training, or sampling as the \textit{conditional variational autoencoder} \citep{NIPS2015_8d55a249}. 

The second feature we discuss is recombination. By using copula-based models (VCAE and EBCAE), we can facilitate the decomposition idea and split the latent space in its dependence structure and margins, i.e., we combine the dependence structure of images with a specific attribute with the marginal distributions of images with different attributes. Therefore, copula-based methods allow controlling the attributes of created samples to some extent. Our experiments suggest that the dependence structure provides the basic properties of an image, while the marginal distributions are responsible for details (see, e.g., Figure \ref{glasses}). However, we want to point out that it is not generally clear what information is embedded in the dependence structure and what information is in the marginal distributions of latent space. This might also depend on the autoencoder and the dataset at hand. Thad said, using such a decomposition enables higher flexibility and hopefully fuels new methodological developments in this field.
\begin{figure}[htb]
		\centering
		\includegraphics[width=1\linewidth]{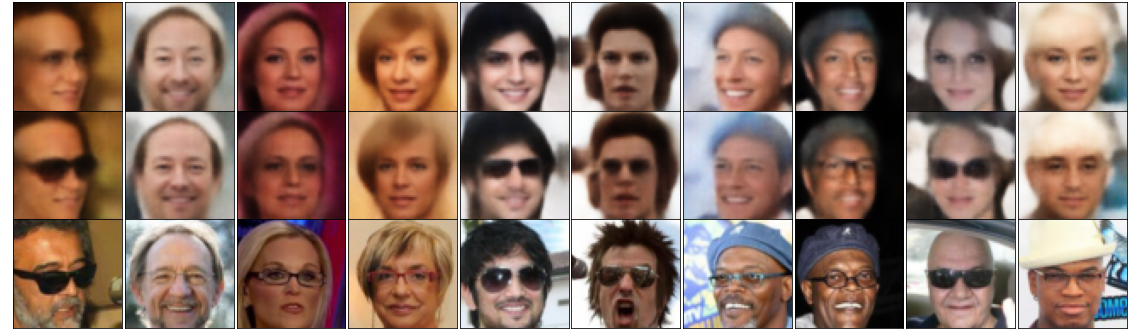}
	\caption{\textcolor{black}{Samples from recombination experiment with the EBCAE. Glasses are removed by using the marginal distribution of the training data without glasses in the latent space. \textbf{Top row:} Samples created with  the dependence structure in latent space from samples with glasses and marginal distributions in latent space from samples without glasses.  \textbf{Middle row:} Nearest neighbor of newly created sample in the training data after decoding. \textbf{Bottom row:} Original input picture of nearest neighbor in latent space. }}
	\label{glasses}
\end{figure}

\section{Discussion}
\label{sec:discu}
In this section, we want to discuss the results of our experiments and want to express some further thoughts. 
In summary, we observed that sampling from the latent space via the investigated methods is indeed a viable approach to turn an autoencoder into a generative model and may be promising for application in more advanced autoencoders. However, each modeling approach in this setting comes with its own restrictions, advantages, and problems. 

We witness a trade-off between the ability to generalize, i.e., to create genuinely new pictures, and sample quality, i.e., to avoid unrealistic colors or artefacts. In cases where new data points are sampled in the neighborhood to existing points (as in the KDE or EBCAE), the newly generated data stays in somehow natural bounds and provides realistic, but not completely new, decoded samples. On the other hand, modeling the latent space too generically leads to bad-quality images. We believe this is similar to leaving the feasible set of an optimization problem or sampling from a wrong prior. While being close to actual points of the original latent space, new samples stay within the feasible set. By moving away from these points, the risk of sampling from an unfeasible region and thus creating unrealistic new samples increases. Recombination via a copula-based approach of marginal distributions and dependence structures offers the possibility to detect new feasible regions in the latent space for the creation of realistic images. Also, interpolating by building convex combinations of two points in the latent space seems reasonable. However, without further restrictions during training (see, e.g.,  discussion in \citealt{Ghosh2020From}), we cannot principally guarantee proper interpolation results.
Further, we observe that the mentioned trade-off is not reflected by the performance metrics. Therefore, we strongly recommend not only checking quantitative results but also finding and analyzing the nearest neighbor in the original data to detect the pure reproduction of pictures. This also reveals that the development of further evaluation metrics could be beneficial. 

A closely related issue is the choice of a parametric vs. a non-parametric modeling method in the latent space. Parametric methods can place probability mass in the latent space, where no data point of the original input data was observed. Thus, parametric methods are able to generate (truly) new data, subject to their assumption. However, if the parametric assumption is wrong, the model creates samples from 'forbidden' areas in the latent space leading to unrealistic images. In spite of this, carefully chosen parametric models can be beneficial, and even a log-likelihood is computable and traceable (although we do not use it for training). Non-parametric methods avoid this human decision and possible source of error completely but are closely bound to the empirical distribution of the given input data. Consequently, such methods can miss important areas of the latent space but create more realistic images. Furthermore, adjusting parameters of the non-parametric models, such as increasing bandwidths or lowering truncation levels, offer possibilities to slowly overcome these limitations.

Besides the major points above, the EBCAE and KDE offer an easy way of targeted sampling without additional training effort. This can be beneficial for various applications and is not as straightforward with other methods. Lastly, the investigated methods differ in their runtime. While vine copula learning and sampling is very time-intensive for high dimensions, the EBCAE is much faster but still outperformed by the competitors. For the non-copula methods, the GMM is really fast in both datasets while still capturing the dependence structure to some extent. In contrast to that, the Real NVP needs more time for training but is rather quick in generating new samples.

To sum up, we can confirm that there are indeed simple methods to turn a plain autoencoder into a generative model, \textcolor{black}{which may then also be beneficial in more complex generative models}. We conclude that the optimal method to do so depends on the goals of the user. Besides runtime considerations, the specific application of the autoencoder matters. For example, if one is interested in targeted sampling, EBCAE or KDE should be applied. Recombination experiments call for a copula-based approach, whereas in all cases, the trade-off between generalization and out-of-bound sampling should be considered.

\bibliography{cas-refs}

\appendix
\section*{Appendix}
\section{Pseudocode: Overall sampling approach}
\label{app:pseudo}
\begin{algorithm}[h]
\small
\KwIn{Autoencoder with Encoder $f$ and Decoder $g$}
\Begin{
  Compute latent space $Y$ by passing training samples through encoder $f$\\
  \For{each method}{
    Model the latent space by fitting the respective method \\
    \noindent Create new samples from the latent space $Y'$ by drawing (randomly) from the fitted method \\
    }
    \For{each element $Y_i'$ in $Y'$}{
        Decode $Y_i'$ by passing it through the decoder $g$\\}}
\KwOut{New sample $X'$} 
\caption{Overall sampling approach}
\label{algo2}
\end{algorithm}

\section{Details on the Vine Copula}
\label{app:vine}
In the vine copula autoencoder \cite{Vatter2019} use \textit{regular-vine (r-vines)}.
A r-vine is built of a sequence of linked trees $T_i = (V_i, E_i)$, with nodes $V_i$ and edges $E_i$ for $i=1,\dots,d-1$. A $d-$dimensional vine tree structure $V = (T_1,..., T_{d-1})$ is a sequence of $T-1$ trees if (see \citealt{Czado2019AnalyzingDD}):\begin{enumerate}
\item Each tree $T_j = (N_i , E_i )$ is connected, i.e. for all nodes $a, b \in T_i , i = 1,..., d-1, $ there exists a path $n_1,..., n_k \subset N_j$ with $a = n_1, b = n_k$.
\item $T_1$ is a tree with node set $N_1 =\{1,..., d\}$ and edge set $E_1$.
\item For $i \geq 2$, $T_j$ is a tree with node set $N_i = E_{i-1}$ and edge set $E_i$ .
\item For $i = 2,..., d-1$ and $\{a, b\}\in E_i$ it must hold that $|a \cap b|= 1$.
\end{enumerate}
An example of a five-dimensional vine tree structure is given below in Figure \ref{vine}. 
Note that the structure has to be estimated and multiple structures are possible. For details on vine copula estimation, see \cite{Czado2019AnalyzingDD,Joe,10.2307/1558694}.

\begin{figure}[h]

		\centering
		\includegraphics[width=.8\linewidth]{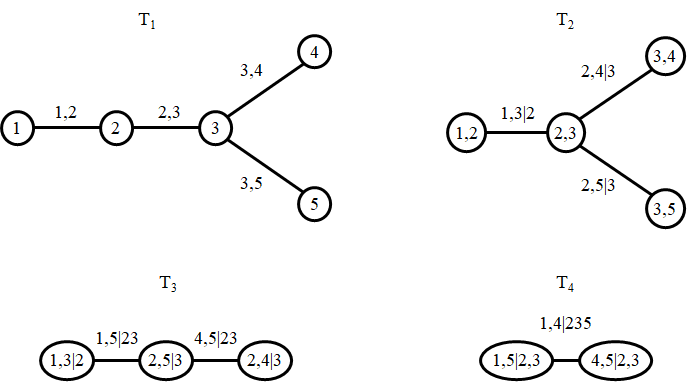}
	\caption{Example of a vine copula tree structure $T_1-T_4$ for five dimensions.  }
	\label{vine}
\end{figure}
\FloatBarrier
\section{Asymptotics of the Empirical Beta Copula}
\label{app:beta}
Theorem \ref{theo2} gives the asymptotic behavior of the empirical beta copula.
\begin{theorem}[Asymptotics of the empirical beta copula] 
\label{theo2}
Let the copula $C$ have continuous first-order partial derivatives $\dot{C}_j=\delta C(\mathbf{u})/\delta u_j$ for each $j \in \{1, \dots , d\}$ on the set $I_j=\{\mathbf{u} \in[0,1]^d:0<u_j<1\}$. The corresponding empirical copula is denoted as $\mathbb{C}_n$, with empirical copula process $\mathbb{G}_n=\sqrt{n}\biggl(\mathbb{C}_n(\mathbf{u})-C(\mathbf{u})\biggl)$ and empirical beta copula $\mathbb{C}^\beta_n$ with empirical beta copula process $\mathbb{G}^\beta_n=\sqrt{n}\biggl(\mathbb{C}^\beta _n(\mathbf{u})-C(\mathbf{u})\biggl)$. Suppose $\mathbb{G}_n\rightsquigarrow \mathbb{G}$ for $n\xrightarrow[]{} \infty$ to $\mathbb{G}$ in $l^{\infty}([0,1]^d)$, where $\mathbb{G}$ is a limiting process having continuous trajectories almost surely. Then, in $l^{\infty}([0,1]^d)$ 
$$\mathbb{G}^\beta_n=\mathbb{G}_n+o_p(1), n\longrightarrow\infty.$$
\end{theorem}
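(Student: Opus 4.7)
The plan is to show that $\sqrt{n}\sup_{\mathbf{u}\in[0,1]^d}|\mathbb{C}^\beta_n(\mathbf{u})-\mathbb{C}_n(\mathbf{u})|=o_p(1)$, since the identity $\mathbb{G}^\beta_n-\mathbb{G}_n=\sqrt{n}(\mathbb{C}^\beta_n-\mathbb{C}_n)$ then gives the claimed tightness in $l^\infty([0,1]^d)$ and the weak convergence follows from the assumed convergence $\mathbb{G}_n\rightsquigarrow\mathbb{G}$ by Slutsky. First I would substitute the definitions in Section~\ref{sec:latent} to obtain
\[
\mathbb{C}^\beta_n(\mathbf{u})-\mathbb{C}_n(\mathbf{u})=\frac{1}{n}\sum_{i=1}^n\Bigl[\prod_{j=1}^d F_{n,r^{(n)}_{i,j}}(u_j)-\prod_{j=1}^d\mathbf{1}\{r^{(n)}_{i,j}/n\le u_j\}\Bigr].
\]

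Next I would apply the standard telescoping identity $\prod_j a_j-\prod_j b_j=\sum_{k=1}^d(\prod_{j<k}a_j)(a_k-b_k)(\prod_{j>k}b_j)$ with $a_j=F_{n,r^{(n)}_{i,j}}(u_j)$ and $b_j=\mathbf{1}\{r^{(n)}_{i,j}/n\le u_j\}$. Since all factors lie in $[0,1]$, the problem reduces to controlling, uniformly in $u_k\in[0,1]$, the coordinate-wise discrepancies $F_{n,r^{(n)}_{i,k}}(u_k)-\mathbf{1}\{r^{(n)}_{i,k}/n\le u_k\}$ averaged over $i$, possibly weighted by bounded factors from the other coordinates. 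For the unweighted average I would use the fact that $\{r^{(n)}_{1,k},\ldots,r^{(n)}_{n,k}\}$ is a permutation of $\{1,\ldots,n\}$, so $\sum_{i=1}^n F_{n,r^{(n)}_{i,k}}(u)=\sum_{r=1}^n\mathbb{P}(U_{(r)}\le u)=\mathbb{E}[\#\{r:U_{(r)}\le u\}]=nu$, while $\sum_{i=1}^n\mathbf{1}\{r^{(n)}_{i,k}/n\le u\}=\lfloor nu\rfloor$; hence the signed average is bounded deterministically by $1/n$ and vanishes after rescaling by $\sqrt{n}$.

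The main obstacle is that the telescoping decomposition produces cross-terms that couple dimensions through $i$, so the simple permutation cancellation above cannot be invoked term-by-term. To handle this I would fall back on empirical-process arguments: the idea is to exploit the beta concentration $F_{n,r}(u)\in\{0,1\}+O(\exp(-cn(u-r/n)^2))$ away from the transition zone $|u-r/n|\lesssim n^{-1/2}$, within which only $O(\sqrt{n})$ ranks contribute, each with a discrepancy of order one. Combining this with the Donsker property of the underlying multivariate indicator class (which already underlies the hypothesis $\mathbb{G}_n\rightsquigarrow\mathbb{G}$) would allow one to replace the weighting factors by their limiting versions and to bound the resulting cross-terms uniformly by $o(n^{-1/2})$ via a bracketing argument on the class of Beta CDFs indexed by $r$.

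Putting these pieces together across the $d$ telescoped coordinates yields $\sqrt{n}\sup_{\mathbf{u}}|\mathbb{C}^\beta_n(\mathbf{u})-\mathbb{C}_n(\mathbf{u})|=o_p(1)$, which is precisely the stated equivalence in $l^\infty([0,1]^d)$. I expect the bracketing/uniformity step controlling the cross-terms in the telescoping decomposition to be the hardest part; the one-dimensional cancellation via the permutation identity is essentially algebraic, while the multidimensional passage requires a careful empirical-process bound to avoid a loss of a $\sqrt{n}$ factor from the $O(\sqrt{n})$ transition-zone ranks.
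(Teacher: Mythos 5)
You should first note that the paper does not actually prove Theorem \ref{theo2}; its ``proof'' is a pointer to Section~3 of \citealt{SEGERS201735}, so the relevant comparison is with the argument given there. Your reduction of the claim to $\sqrt{n}\sup_{\mathbf{u}}|\mathbb{C}^\beta_n(\mathbf{u})-\mathbb{C}_n(\mathbf{u})|=o_p(1)$ is the right target, and the one-dimensional permutation identity $\sum_{i=1}^n F_{n,r^{(n)}_{i,k}}(u)=nu$ is correct. But there is a genuine gap exactly where you yourself locate the difficulty: the cross-terms of the telescoping decomposition. Your own accounting shows why the proposed fix cannot work as stated: on the order of $\sqrt{n}$ ranks lie in the transition zone $|u-r/n|\lesssim n^{-1/2}$, each contributing a coordinate discrepancy of order one, so after averaging over $i$ the cross-term is generically $O(n^{-1/2})$ and the rescaled quantity is $O(1)$, not $o(1)$. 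A bracketing bound over the class of Beta distribution functions only gives uniform control of fluctuations; it does not manufacture the cancellation needed to beat this $O(1)$ barrier. A further warning sign is that your sketch never uses the hypothesis that $C$ has continuous first-order partial derivatives, which is in fact essential to the result.

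The proof in \citealt{SEGERS201735} avoids the telescoping entirely via the smoothing identity $\mathbb{C}^\beta_n(\mathbf{u})=\mathbb{E}\bigl[\mathbb{C}_n(B_1/n,\dots,B_d/n)\bigr]$, where $B_1,\dots,B_d$ are independent with $B_j\sim\mathrm{Bin}(n,u_j)$; this follows from $F_{n,r}(u)=\mathbb{P}\bigl(\mathrm{Bin}(n,u)\geq r\bigr)$ together with Fubini applied to your equation for $\mathbb{C}^\beta_n-\mathbb{C}_n$. One then decomposes $\mathbb{G}^\beta_n(\mathbf{u})-\mathbb{G}_n(\mathbf{u})=\mathbb{E}\bigl[\mathbb{G}_n(\mathbf{B}/n)-\mathbb{G}_n(\mathbf{u})\bigr]+\sqrt{n}\,\mathbb{E}\bigl[C(\mathbf{B}/n)-C(\mathbf{u})\bigr]$. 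The first term is $o_p(1)$ uniformly by the asymptotic equicontinuity of $\mathbb{G}_n$ --- this is precisely where the hypothesis $\mathbb{G}_n\rightsquigarrow\mathbb{G}$ with almost surely continuous trajectories enters --- combined with the binomial concentration $|B_j/n-u_j|=O_p(n^{-1/2})$. The second, deterministic bias term is where the continuous partial derivatives of $C$ are needed, since the plain Lipschitz bound $|C(\mathbf{w})-C(\mathbf{u})|\leq\sum_j|w_j-u_j|$ only yields $O(1)$ after multiplication by $\sqrt{n}$. To salvage your route you would have to reorganize the cross-terms until this expectation representation (and hence the same two ingredients) reappears; telescoping plus bracketing alone does not get there.
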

\begin{proof}{}
See \citealt{SEGERS201735} Section 3.
\end{proof}
In short, Theorem \ref{theo2} states that the empirical beta copula has the same large-sample distribution as the empirical copula and, thus, converges to the true copula. However, the empirical beta copula performs better for small samples. \citealt{SEGERS201735} demonstrate that the empirical beta copula outperforms the empirical copula both in terms of bias and variance. 

\FloatBarrier

\section{Implementation}
\label{app:imple}
\subsection{Implementation of the Autoencoder}
    We implemented the experiments in Python 3.8 \cite{van1995python} using \texttt {numpy 1.22.0, scipy 1.7.1,, scikit-learn 1.1.0 and pytorch 1.10.1} \cite{2020NumPy-Array,2020SciPy-NMeth,scikit-learn,NEURIPS2019_9015}. The AEs were trained using the Adam optimizer with learning rate $0.001$ for MNIST and $0.0005$ for SVHN and CelebA. A weight decay of $0.001$ was used in all cases. Batch sizes were fixed to 128 (MNIST), 32 (SVHN) and 100 (CelebA) samples for training, while the size of the latent space was set to 10 (MNIST), 20 (SVHN) and 100 (CelebA) according to the data sets size and complexity. Training was executed on a separate train set and evaluated on a hold-out test set of 2000 samples, similar to \citealt{Vatter2019}. For comparison with the VCAE and performance metrics, we have resorted to the implementation from \citealt{Vatter2019} and \citealt{xu2018empirical}. The architectures for all networks are described in Appendix \ref{app:arch}. We trained the autoencoders on an NVIDIA Tesla V100 GPU with 10 Intel Xeon Gold 6248 CPUs. The experiments are executed afterward on a PC with an Intel i7-6600U CPU and 20GB RAM.

\subsection{Architectures of Autoencoders and VAE}
\label{app:arch}
We use the same architecture for EBCAE, VCAE, and VAE as described below. All models were trained by minimizing the Binary Cross Entropy loss.
\subsection*{MNIST}
\begin{addmargin}{0.02\textwidth}
\textbf{Encoder:}
\begin{align*}
    x \in R^{32 \times 32} &\rightarrow Conv_{32} &\rightarrow BN &\rightarrow ReLu\\
    &\rightarrow Conv_{64}&\rightarrow BN &\rightarrow ReLu\\
    &\rightarrow Conv_{128}&\rightarrow BN &\rightarrow ReLu\\
    &           &               &\rightarrow FC_{10}
\end{align*}

\textbf{Decoder:}

\begin{align*}
    y \in R^{10} \rightarrow FC_{100}&\rightarrow ConvT_{128} &\rightarrow BN &\rightarrow ReLu\\
    &\rightarrow ConvT_{64}&\rightarrow BN &\rightarrow ReLu\\
    &\rightarrow ConvT_{32}&\rightarrow BN &\rightarrow ReLu\\
    &           &               &\rightarrow FC_{1}
\end{align*}

\end{addmargin} 
For all (de)convolutional layers, we used 4 × 4 filters, a stride of 2, and a padding of 1. $BN$ denotes batch normalization, $ReLU$ rectified linear units, and $FC$ fully connected layers. Last, $Conv_k$ denotes the convolution with $k$ filters. 

\subsection*{SVHN}
\begin{addmargin}{0.02\textwidth}
In contrast to the MNIST dataset, images in SVHN are colored. We do not use any preprocessing in this dataset.

\textbf{Encoder:}
\begin{align*}
    x \in R^{3 \times 32 \times 32} &\rightarrow Conv_{64   } &\rightarrow BN &\rightarrow ReLu\\
    &\rightarrow Conv_{128}&\rightarrow BN &\rightarrow ReLu\\
    &\rightarrow Conv_{256}&\rightarrow BN &\rightarrow ReLu\\
        &           &  \rightarrow FC_{100}             &\rightarrow FC_{20}
\end{align*}

\textbf{Decoder:}
\begin{align*}
    y \in R^{20} \rightarrow FC_{100}    &\rightarrow ConvT_{256}&\rightarrow BN &\rightarrow ReLu\\
    &\rightarrow ConvT_{128}&\rightarrow BN &\rightarrow ReLu\\
        &\rightarrow ConvT_{64}&\rightarrow BN &\rightarrow ReLu\\
         &\rightarrow ConvT_{32}&\rightarrow BN &\rightarrow ReLu\\
    &           &         &\rightarrow FC_{1}
\end{align*}

\end{addmargin} 
Notations are the same as described above.

\subsection*{CelebA}
\begin{addmargin}{0.02\textwidth}
In contrast to the MNIST dataset, images in CelebA are colored. Further, we first took central crops of 140 × 140 and resize the images to a resolution 64 × 64.

\textbf{Encoder:}
\begin{align*}
    x \in R^{3 \times 64 \times 64} &\rightarrow Conv_{64} &\rightarrow BN &\rightarrow LeakyReLu\\
    &\rightarrow Conv_{128}&\rightarrow BN &\rightarrow LeakyReLu\\
    &\rightarrow Conv_{256}&\rightarrow BN &\rightarrow LeakyReLu\\
        &\rightarrow Conv_{512}&\rightarrow BN &\rightarrow LeakyReLu\\
        &           &  \rightarrow FC_{100}             &\rightarrow FC_{100}
\end{align*}

\textbf{Decoder:}
\begin{align*}
    y \in R^{100} \rightarrow FC_{100} &\rightarrow Conv_{512} &\rightarrow BN &\rightarrow ReLu\\
    &\rightarrow ConvT_{256}&\rightarrow BN &\rightarrow ReLu\\
    &\rightarrow ConvT_{128}&\rightarrow BN &\rightarrow ReLu\\
        &\rightarrow ConvT_{64}&\rightarrow BN &\rightarrow ReLu\\
         &\rightarrow ConvT_{32}&\rightarrow BN &\rightarrow ReLu\\
    &           &         &\rightarrow FC_{1}
\end{align*}

\end{addmargin} 
$LeakyReLU$ uses a negative slope of $0.2$, and padding was set to 0 for the last convolutional layer of the encoder and the first of the decoder. All other notations are the same as described above. 

\subsection{Implementation of Real NVP}
\label{app:realNVP}
In our study, we used a Real NVP (see \citealt{realNVP}) to model the latent space of the autoencoder and serve as a benchmark. 
For all data sets, we use spatial checkerboard masking, where the mask has a value of 1 if the sum of coordinates is odd, and 0 otherwise. For the MNIST data set, we use 4 coupling layers with 2 hidden layers each and 256 features per hidden layer. Similarly, for the SVHN data set, we also use four coupling layers with two hidden layers each and 256 hidden layer features. Lastly, for the CelebA data set, we use four coupling layers with two hidden layers each and 1024 hidden layer features. For all data sets, we applied a learning rate of 0.0001 and learn for 2000 epochs.
\FloatBarrier
\section{Image Interpolation of the Autoencoder}
\label{app:interpo}
We show that our used autoencoder learned a relevant and smooth representation of the data by interpolation in the latent space and, thus, modeling the latent space for generating new images is reasonable. For example, consider two images A and B with latent variables $y_{A,1}, . . . , x_{A,100}$ and
$y_{B,1}, . . . , y_{B,100}$. We now interpolate linearly in each dimension between these two values and feed the resulting interpolation to the decoder to get the interpolated images.
\begin{figure}[h]

		\centering
		\includegraphics[width=1\linewidth]{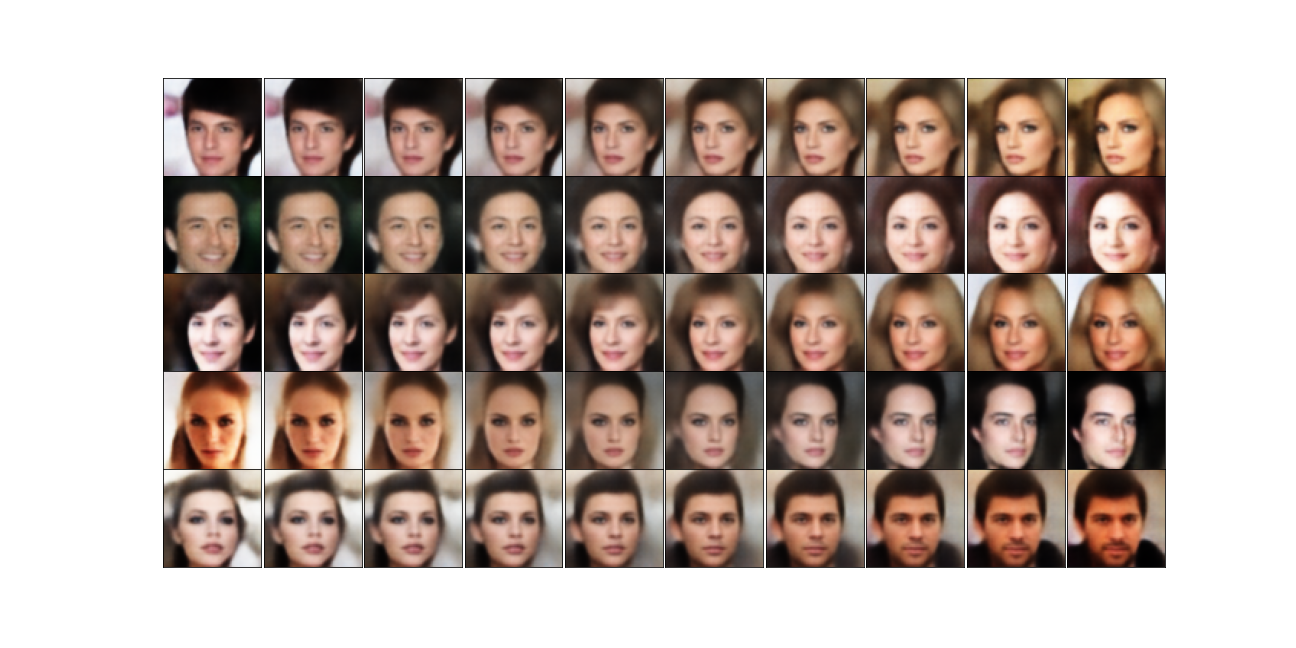}
	\vspace*{-1cm}
	\caption{Interpolation in the latent space of samples of the autoencoder.  }
	\label{interpo}
\end{figure}
Each row in Figure \ref{interpo} shows a clear linear progression in ten steps from the first face on the left to the final face on the right. For example, in the last row, we see a female with blonde hair slowly transforming into a male with a beard. The transition is smooth, and no sharp changes or random images occur in-between.
\FloatBarrier


\section{Additional Experiments}
\label{sec:add}

\subsection{Numerical Assessment of Methods on CelebA}
\begin{figure}[h]

		\centering
		\includegraphics[width=1\linewidth]{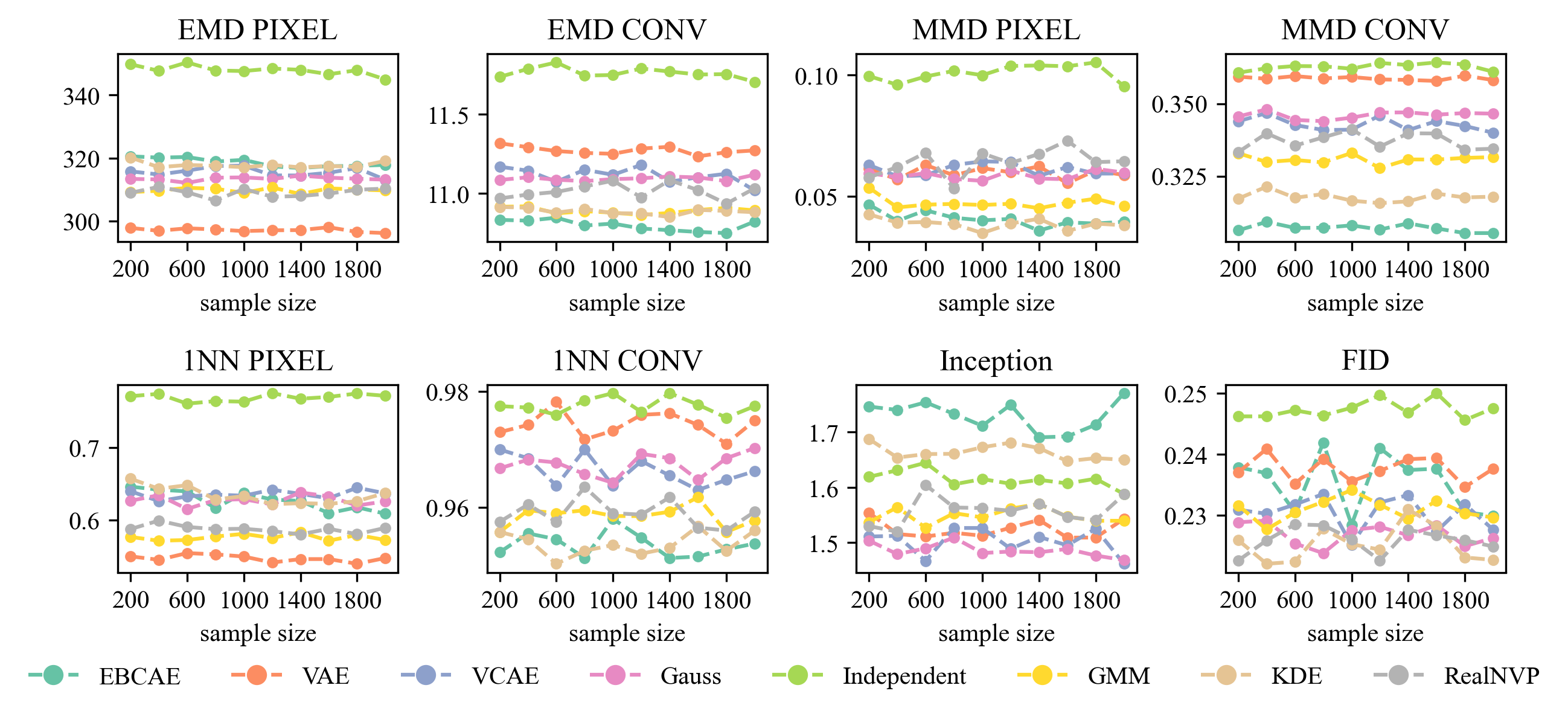}
	\caption{Performance metrics of generative models on \textbf{CelebA}, reported over latent space sample size. Note that they only differ in the latent space sampling and share the same autoencoder. }
	\label{num:CelebA1}
\end{figure}

\FloatBarrier
\newpage
\subsection{Numerical Assessment of Methods on MNIST}
\begin{figure}[h]

		\centering
		\includegraphics[width=1\linewidth]{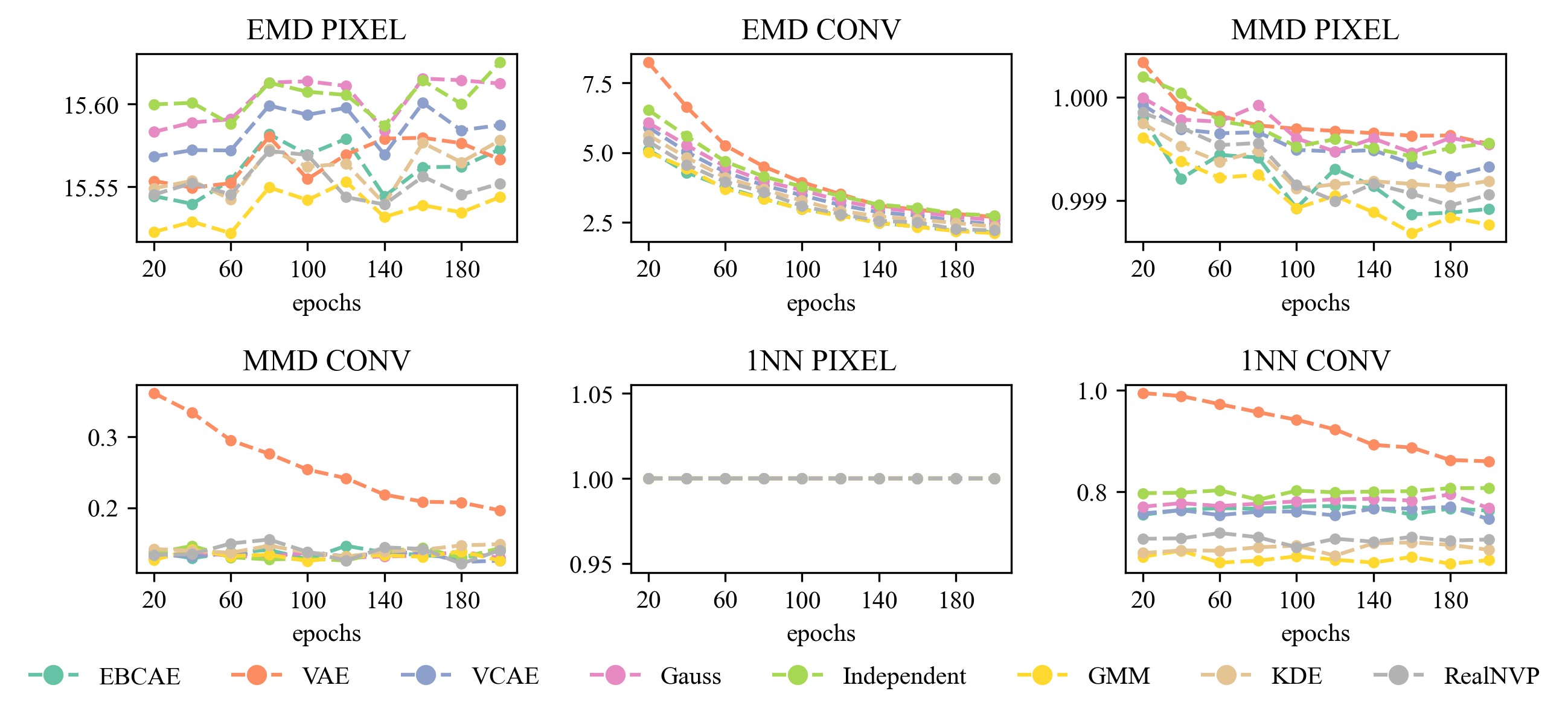}
	\caption{Performance metrics of generative models on \textbf{MNIST}, reported over epochs computed from 2000 random samples. Note that they only differ in the latent space sampling and share the same autoencoder. }
	\label{app:num:MNIST}
\end{figure}

\begin{figure}[h]

		\centering
		\includegraphics[width=1\linewidth]{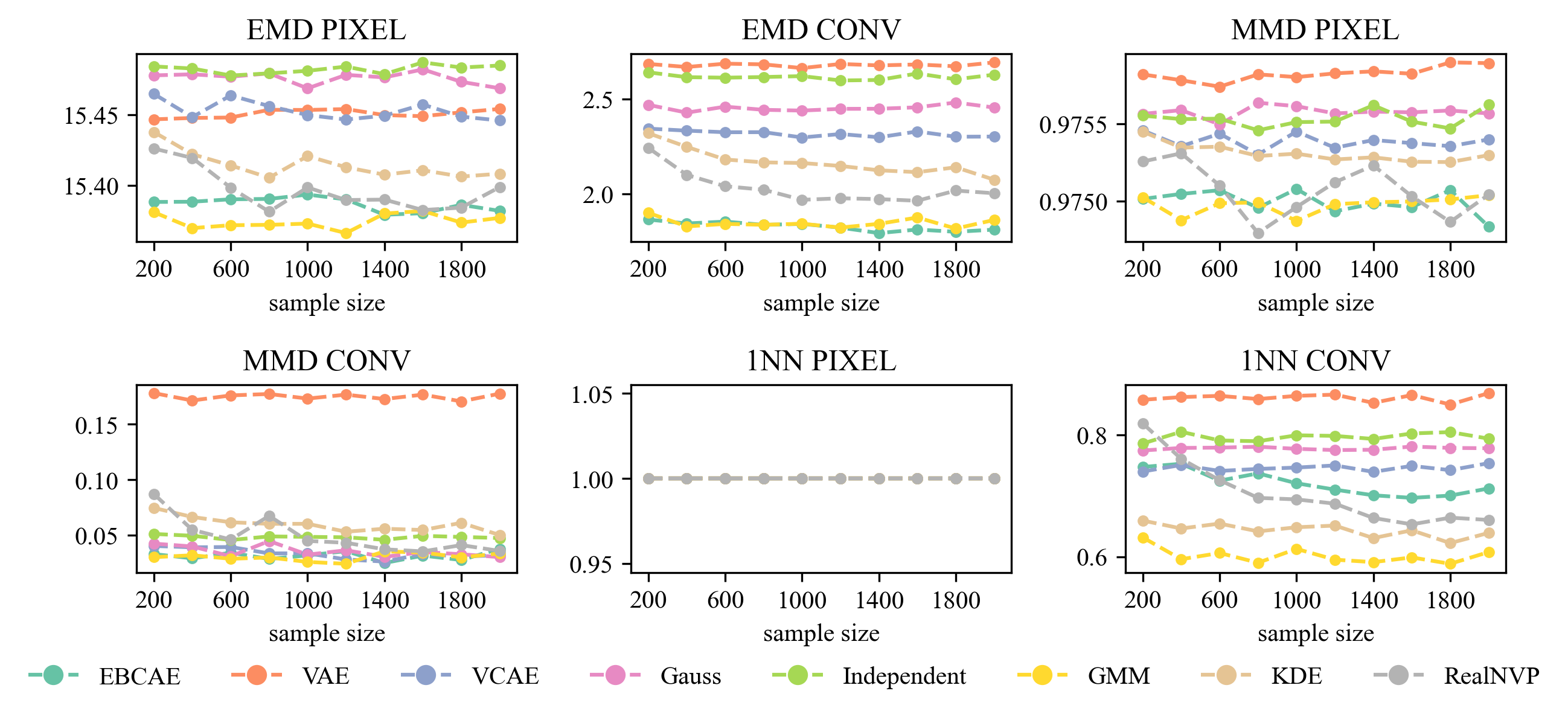}
	\caption{Performance metrics of generative models on \textbf{MNIST}, reported over latent space sample size. Note that they only differ in the latent space sampling and share the same autoencoder.}
	\label{app:num:MNIST_size}
\end{figure}

\FloatBarrier
\newpage
\subsection{Numerical Assessment of Methods on SVHN}
\begin{figure}[h]

		\centering
		\includegraphics[width=1\linewidth]{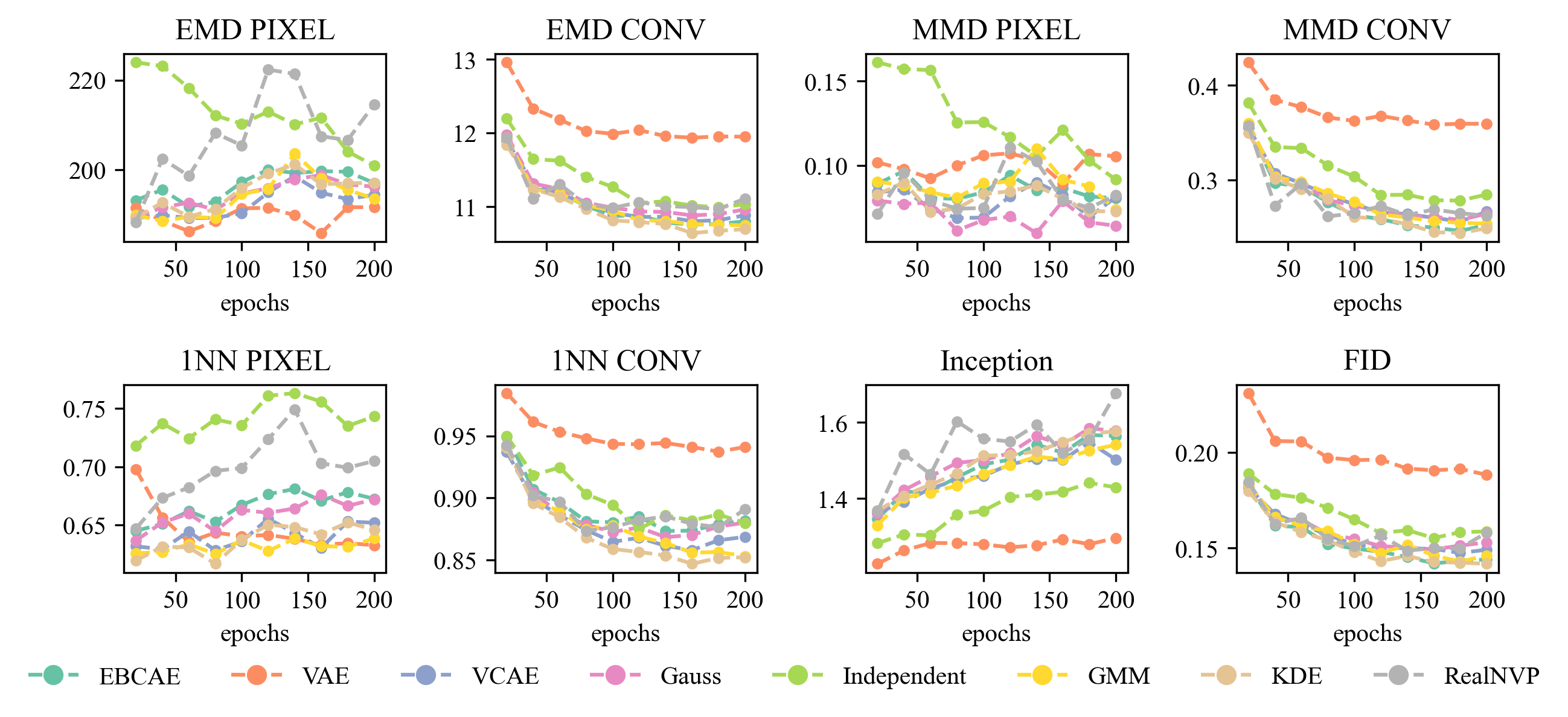}
	\caption{Performance metrics of generative models on \textbf{SVHN}, reported over epochs computed from 2000 random samples. Note that they only differ in the latent space sampling and share the same autoencoder. }
	\label{num:SVHN}
\end{figure}

\begin{figure}[h]

		\centering
		\includegraphics[width=1\linewidth]{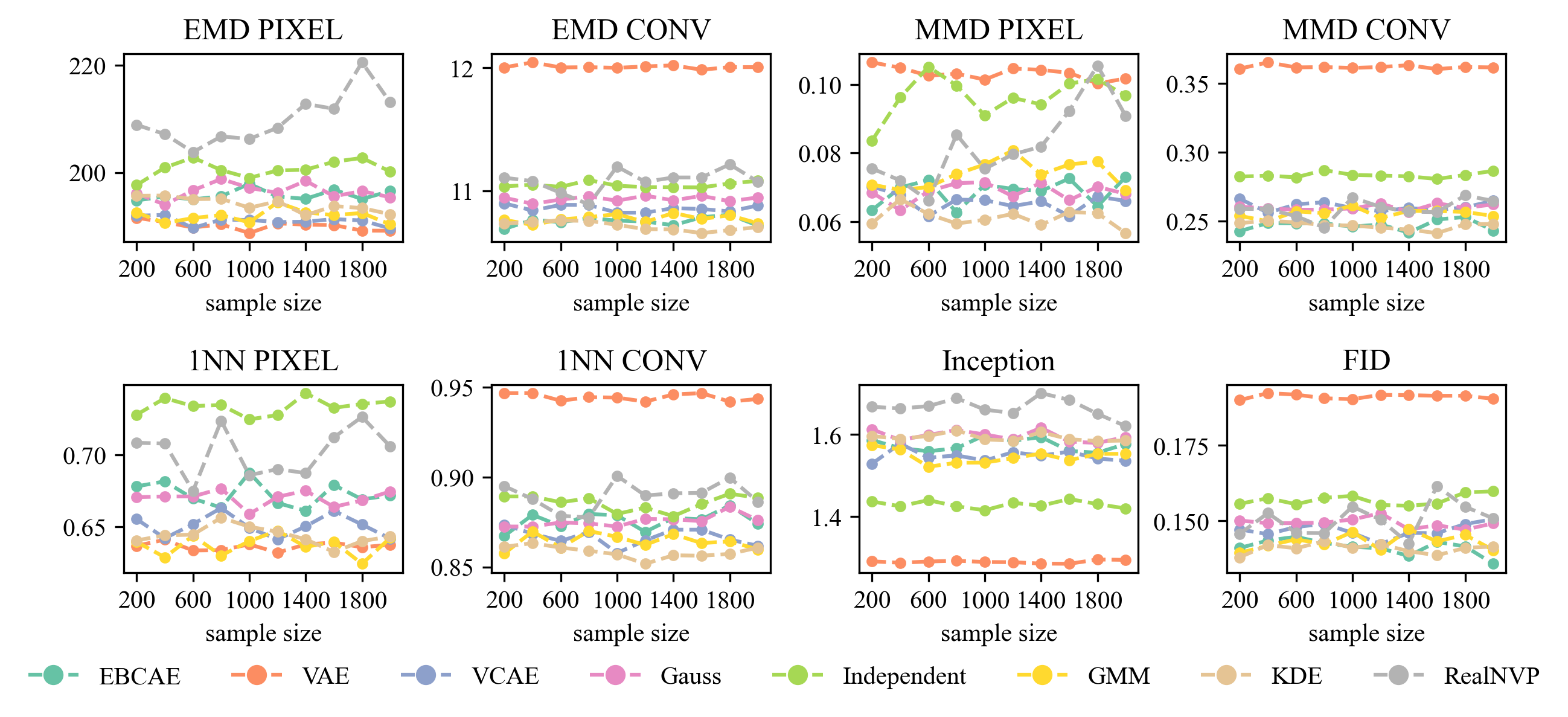}
	\caption{Performance metrics of generative models on \textbf{SVHN}, reported over latent space sample size. Note that they only differ in the latent space sampling and share the same autoencoder.}
	\label{num:SVHN2}
\end{figure}
\FloatBarrier
\newpage
\subsection{Generated Images from SVHN}
\begin{figure*}[h]
\centering
  \includegraphics[width=.45\linewidth]{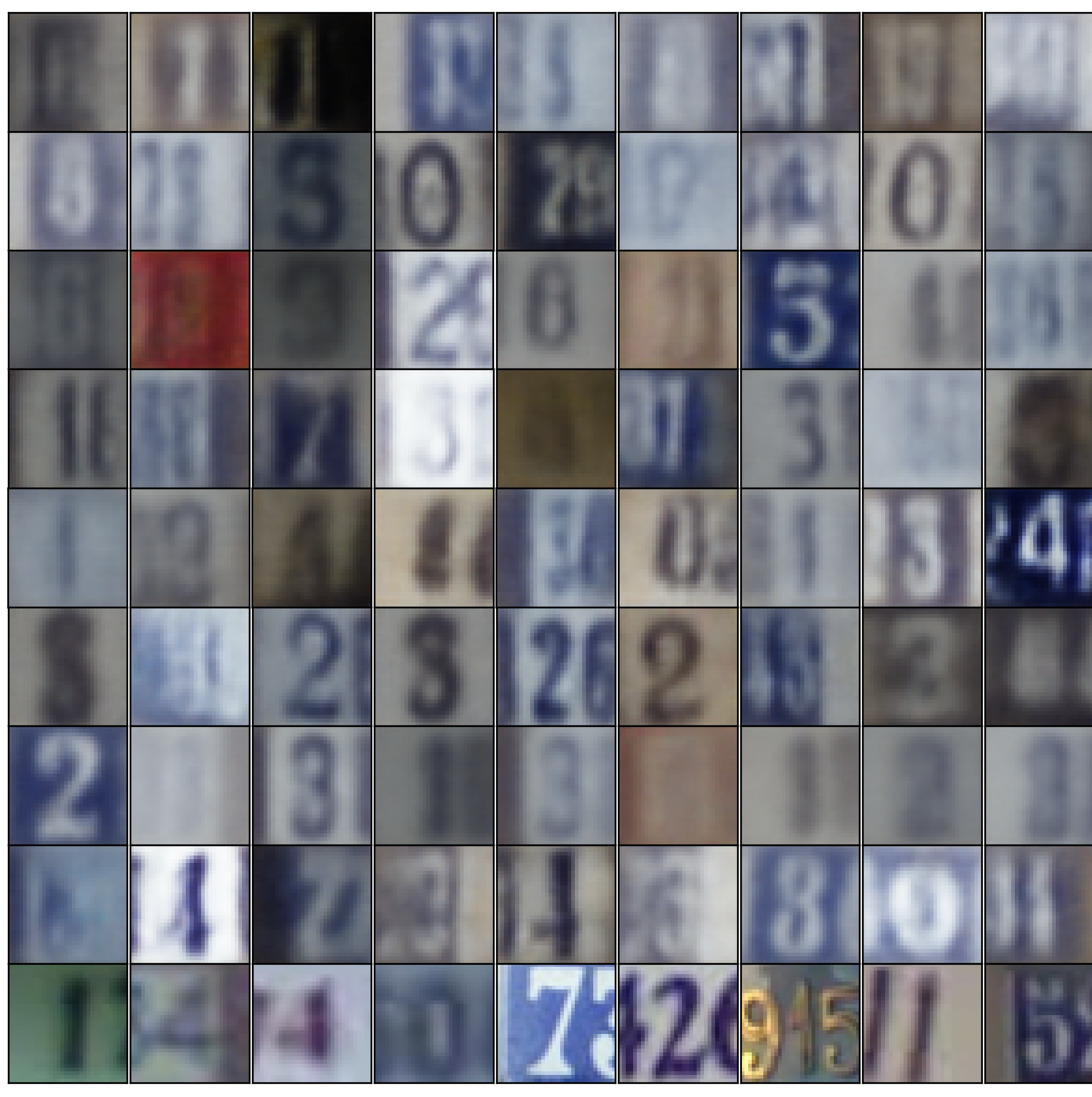}
\caption{Comparison of synthetic samples of different Autoencoder models. \textbf{1$^{st}$ row:} Fitted normal distribution, \textbf{2$^{nd}$ row:} Independent margins, \textbf{3$^{rd}$ row:}  KDE-AE, \textbf{4$^{th}$ row:} GMM, \textbf{5$^{th}$ row:}  VCAE, \textbf{6$^{th}$ row:}  EBCAE, \textbf{7$^{th}$ row:} VAE, \textbf{8$^{th}$ row:} Real NVP, \textbf{Last row:} original pictures. }
\end{figure*}
\FloatBarrier

\section{Code}

Computer Code is available at the following url: \url{https://github.com/FabianKaechele/SamplingFromAutoencoders}


\end{document}